\newtheorem{definition}{Definition}
\newtheorem{theorem}{Theorem}
\newtheorem{example}{Example}
\long\def\symbolfootnote[#1]#2{\begingroup%
 \def\thefootnote{\fnsymbol{footnote}}\footnote[#1]{#2}\endgroup}
\begin{document}

\def\Rd {{\mathbb {R}}^d}
\def\Rn {{\mathbb {R}}^N}
\def\Rm {{\mathbb {R}}^M}
\def\Nr {{\mathbb {N}}}
\def\Rmt {{\mathbb {R}}^{2M}}
\def\R {{\mathbb {R}}}
\def\Z {{\mathbb {Z}}}
\def\C {{\cal {C}}}
\def\Xb {{\mathbf{X}}}
\def\xb {{\mathbf{x}}}
\def\yb {{\mathbf{y}}}
\def\zb {{\mathbf{z}}}
\def\Mc {{\cal M}}
\def\eq#1{\begin{equation}{#1}\end{equation}}

\title{Graphical Models as Block-Tree Graphs}
\author{Divyanshu Vats and Jos\'{e} M. F. Moura
\thanks{The authors are with the Department of Electrical and Computer Engineering, Carnegie Mellon University, Pittsburgh, PA, 15213, USA (email: dvats@andrew.cmu.edu, moura@ece.cmu.edu, ph: (412)-268-6341, fax: (412)-268-3980).}
}
\date{}

\maketitle

\begin{abstract}
We introduce \emph{block-tree graphs} as a framework for deriving efficient algorithms on graphical models.  We define block-tree graphs as a tree-structured graph where each node is a cluster of nodes such that the clusters in the graph are disjoint.  This differs from junction-trees, where two clusters connected by an edge always have at least one common node.  When compared to junction-trees, we show that constructing block-tree graphs is faster and finding optimal block-tree graphs has a much smaller search space.  For graphical models with boundary conditions, the block-tree graph framework transforms the boundary valued problem into an initial value problem.  For Gaussian graphical models, the block-tree graph framework leads to a linear state-space representation.  Since exact inference in graphical models can be computationally intractable, we propose to use spanning block-trees to derive approximate inference algorithms.  Experimental results show the improved performance in using spanning block-trees versus using spanning trees for approximate estimation over Gaussian graphical models.
\end{abstract}

\begin{IEEEkeywords}
Graphical Models, Markov Random Fields, Belief Propagation, Recursive Representation, Junction-Tree, Kalman Filter, Graphical Approximation, Block-Tree Graphs, Gaussian Graphical Models, Smoothing, Optimal Estimation
\end{IEEEkeywords}

\pagebreak
%{\singlespace
%\tableofcontents
%}
\section{Introduction}

A graphical model is a random vector defined on a graph such that each node represents a random variable (or multiple random variables), and edges in the graph represent conditional independencies.  The underlying graph structure in a graphical model leads to a factorization of the joint probability distribution.  This property has lead to graphical models being used in many applications such as sensor networks, image processing, computer vision, bioinformatics, speech processing, and ecology \cite{Reu2005,WainwrightJordan2008}, to name a few.  
This paper derives efficient algorithms on graphical models.  
The structure of the graph plays an important role in determining the complexity of these algorithms.  Tree-structured graphs are suitable for deriving efficient inference and estimation algorithms \cite{Pearl1988}.  \emph{Inference} in graphical models corresponds to finding marginal distributions given a joint probability distribution.  \emph{Estimation} of graphical models corresponds to performing inference over the conditional distribution $p(\xb | \yb)$, where $\xb$ is a random vector defined on a graph with noisy observations $\yb$.  State-space models can be interpreted as graphical models defined on a chain or a tree \cite{ChouWillsky1994a, ChouWillsky1994b}, for which efficient estimation algorithms include the Kalman filter \cite{Kalman1960} or recursive smoothers \cite{KailathSayed}.  Estimation and inference in arbitrary chain or tree structured graphical models is achieved via belief propagation \cite{Pearl1988}.  These graphical models, however, have limited modeling capability \cite{SudderthWainwrightWillsky2004}, and it is thus desirable to consider more general graphs, i.e., graphs with cycles, an example of which is shown in Fig.~\ref{fig:ex_ug_1}.  

A popular method for inference in graphs with cycles is to perform variable elimination, where the joint probability distribution is marginalized according to a chosen elimination order, which is a permutation of the nodes in the graph.  Frameworks for variable elimination have been proposed in \cite{ZhangPoole1994,Dechter1999,Cozman2000}.  A general framework for variable elimination is achieved by constructing a junction-tree \cite{LauritzenSpiegelhalter1988}, which is a tree-structured graph with edges between clusters of nodes.  The key properties of junction-trees are highlighted as follows:

\begin{enumerate}[$(i)$]
\item \textbf{Clusters in a junction-tree:} Two clusters connected by an edge in a junction-tree always have at least one common node.  The number of nodes in the cluster with maximum size minus one is called the width of a graph, denoted as $\text{w}(G)$ for a graph $G$.

\item \textbf{Constructing junction-trees:} This consists of two steps: triangulation, which has complexity $O(n)$, and a maximum spanning tree algorithm, which has complexity $O(m^2)$, where $m$ is the number of cliques (see Section \ref{sec:review}) in a triangulated\footnote{A graph is triangulated if all cycles of length four or more have an edge connecting non-adjacent nodes in the cycle} graph \cite{JensenJenson1994}.
The number of cliques $m$ depends on the connectivity of the graph: if a graph is dense (many edges), $m$ can be small and if a graph is sparse (small number of edges), $m$ can be as large as $n-1$.

\item \textbf{Optimal junction-trees:} For a graph $G$, there can be many different associated junction-trees.  An \emph{optimal junction-tree} is the one with minimal width, called the treewidth of the graph \cite{Robertson1986}, denoted as $\text{tw}(G)$.  Finding the optimal junction-tree, and thus the treewidth of a graph, requires a search over at most $n!$ number of possible combinations, where $n$ is the number of nodes in a graph.

\item \textbf{Complexity of inference:} Inference in graphical models using junction-trees can be done using algorithms proposed in \cite{LauritzenSpiegelhalter1988, Jenson1990, ShaferShenoy1990}. 
The complexity of inference using junction-trees is exponential in the treewidth of the graph \cite{ShaferShenoy1990}.
\end{enumerate}

From the above analysis, it is clear that constructing junction-trees can be computationally difficult.  Further, finding optimal junction-trees is hard because of the large search space.
Since finding optimal junction-trees is hard, finding the treewidth of a graph is also hard \cite{Arnborg1987}.  Thus, the complexity of inference using junction-trees really depends on the upper bound on treewidth computed using heuristic algorithms, such as those given in \cite{Bodlaender2010}.

In this paper, we introduce \emph{block-tree graphs}, as an alternative framework for constructing tree-structured graphs from graphs with cycles.  The key difference between block-trees and junction-trees is that the clusters in a block-tree graph are disjoint, whereas clusters in a junction-tree have common nodes.  We use the term block-tree because the adjacency matrix for block-tree graphs is block-structured under a suitable permutation of the nodes.  The key properties of block-tree graphs and its comparison to the junction-tree are outlined as follows:

\begin{enumerate}[$(i')$]
\item \textbf{Clusters in a block-tree:} Clusters in a block-tree graph are always disjoint.  We call the number of nodes in the cluster with maximum size the block-width, denoted as $\text{bw}(G)$ for a graph $G$.

\item \textbf{Constructing block-trees:}
We show that a graph can be transformed into a block-tree graph by appropriately clustering nodes of the original graph.  The algorithm we propose for constructing block-tree graphs involves choosing a root cluster and finding successive neighbors.  An important property is that a block-tree graph is uniquely specified by the choice of the root cluster.  Thus, constructing block-tree graphs only requires knowledge of a root cluster, which is a small fraction of the total number of nodes in the graph.  On the other hand, constructing junction-trees requires knowledge of an elimination order, which is a permutation of all the nodes in the graph.  Constructing a block-tree graph has complexity $O(n)$, where $n$ is the number of nodes in the graphs.  When compared to junction-trees, we avoid the $O(m^2)$ computational step, which is significant savings when $m$ is as large as $n$.

\item \textbf{Optimal block-trees:}  Different choices of root clusters result in different block-tree graphs.  We define an \emph{optimal block-tree graph} as the block-tree graph  with minimal block-width, which we call the \emph{block-treewidth} of a graph, denoted as $\text{btw}(G)$.  We show that computing the optimal block-tree, and thus the block-treewidth, requires a search over $n \choose \lceil n/2 \rceil$ possible number of choices.  Although possibly very large for large $n$, this number is much less than $n!$, the search space for computing optimal junction-trees.

\item \textbf{Complexity of inference:}  We show that the complexity of using block-tree graphs for inference over graphical models is exponential in the maximum sum of cluster sizes of adjacent clusters.
\end{enumerate}

From $(i')-(iii')$, we see that constructing block-tree graphs is faster and finding optimal block-tree graphs has a smaller search space.  In general, the complexity of inference using block-tree graphs is higher, however, we show that there do exist graphical models for which the complexity of inference is the same for both the junction-tree and the block-tree graph.

Using disjoint clusters to derive efficient algorithms on graphical models has been considered in the past, but only in the context of specific graphical models.  For example, \cite{Woods1977} and \cite{MouraBalram1992} derive recursive estimators for graphical models defined on a 2-D lattice by scanning the lattice horizontally (or vertically).  For specific directed graphs, the authors in \cite{Cooper1984} and \cite{PengReggia1986}, used specific disjoint clusters for inference.  To our knowledge, previous work has not addressed questions like optimality of different structures or proposed algorithms for constructing tree-structured graphs using disjoint clusters.  Our block-tree graphs address these questions for any given graph, even non-lattice graphs and arbitrary directed graphs.

Applying our block-tree graph framework to undirected graphical models with boundary conditions, such that the boundary nodes connect to the undirected components in a directed manner, we convert a boundary valued problem into an initial value problem.  Motivation for using such graphs, which are referred to as chain graphs in the literature \cite{LauritzenWemuth1989,Frydenberg1990,AndersonMadiganPerlman2001}, is in accurately modeling physical phenomena whose underlying dynamics are governed by partial differential equations with local conditions imposed on the boundaries.  To not confuse chain structured graphs with chain graphs, in this paper we refer to chain graphs as boundary valued graphs. Such graphical models have been used extensively in the past to model images with boundary conditions being either Dirichlet, Neumann, or periodic, see \cite{BesagMoran1975,KashyapChellapa1983,ChellapaKashyap1985,LevyAdamsWillsky1990, MouraBalram1992} for examples.  To enable recursive processing, past work has either ignored the effect of boundaries or assumed simpler boundary values.  Using our block-tree graph framework, we cluster all boundary nodes in the chain graph into one cluster and then build the block-tree graph.  In \cite{VatsMoura2009j}, we derived recursive representations, which we called a telescoping representation, for random fields over continuous indices and random fields over lattices with boundary conditions.  The results presented here extend the telescoping representations to arbitrary boundary valued graphs, not necessarily restricted to boundary valued graphs over 2-D lattices.
Applying our block-tree graph framework to Gaussian graphical models, we get linear state-space representations, which leads to recursive estimation equations like the Kalman filter \cite{Kalman1960} or the Rauch-Tung-Striebel \cite{RauchTungStriebel1965} smoother.  

As mentioned earlier, the complexity of inference in graphical models is exponential in the treewidth of the graph.  Thus, inference in graphical models is computationally intractable when the treewidth is large \cite{Cooper1990}.  For this reason, there is interest in efficient approximate inference algorithms.  Loopy belief propagation (LBP), where we ignore the cycles in a graph and apply belief propagation, is a popular approach to approximate inference \cite{Pearl1988}. Although LBP works well in several graphs, convergence of LBP is not guaranteed, or the convergence rate may be slow \cite{MurphyWeissJordan1999,SudderthWainwrightWillsky2004}.  Another class of algorithms is based on decomposing a graph into several computationally tractable subgraphs and using the estimates on the subgraphs to compute the final estimate \cite{WainwrightMAP2002,WainwrightTRP2003,SudderthWainwrightWillsky2004}.  We show how block-tree graphs can be used to derive efficient algorithms for estimation in graphical models.  The key step is in using the block-tree graph to find subgraphs, which we call \emph{spanning block-trees}.  We apply the spanning block-tree framework to the problem of estimation in Gaussian graphical models and show the improved performance over spanning trees.

\textbf{Organization: }
Section \ref{sec:background} reviews graphical models, inference algorithms for graphical models, and the junction-tree algorithm.  
Section \ref{sec:block_tree_graph} introduces block-tree graphs, outlines an algorithm for constructing block-tree graphs given an arbitrary undirected graph, and introduces optimal block-tree graphs.  Section \ref{sec:inference_bt} outlines an algorithm for inference over block-tree graphs and discusses the computational complexity of such algorithms.  Section \ref{sub:boundary_valued_graphs} considers the special case of boundary valued graphs.  Section \ref{sec:gaussian_gm} considers the special case of Gaussian graphical models and derives linear recursive state-space representations, using which we outline an algorithm for recursive estimation in graphical models.  Section \ref{sec:approximate_estimation} considers the problem of approximate estimation of Gaussian graphical models by computing spanning block-trees.  Section \ref{sec:summary} summarizes the paper.

\section{Background and Preliminaries}
\label{sec:background}

Section \ref{sec:review} reviews graphical models.  For a more complete study, we refer to \cite{Lauritzen1996}.  Section \ref{sec:inference_algorithms} reviews inference algorithms for graphical models.

\subsection{Review of Graphical Models}
\label{sec:review}

Let $\xb = \{x_s \in \R^d : s \in V\}$ be a random vector defined on a graph $G = (V,E)$, where $V = \{1,2,\ldots,n\}$ is the set of nodes and $E \subset V \times V$ is the set of edges.  
Given any subset 
$W \subset V$, let $x_W = \{x_s : s \in W \}$ denote the set of random variables on $W$.  
An edge between two nodes $s$ and $t$ can either be directed, which refers to an edge from node $s$ to node $t$, or undirected, where the ordering does not matter, i.e., both $(s,t)$ and $(t,s)$ belong to the edge set $E$.  One way of representing the edge set is via an $n \times n$ \emph{adjacency matrix} $A$ such that
%\begin{align}
$A(i,j) = 1$ if $(i,j) \in E$,
$A(i,j) = 0$ if $(i,j) \notin E \,,$
%\end{align}
where we assume $A(i,i) = 1$ for all $i = 1,\ldots,n$.  A \emph{path} is a sequence of nodes such that there is either an undirected or directed edge between any two consecutive nodes in the path.  A graph with only directed edges is called a directed graph.  A directed graph with no cycles, i.e., there is no path with the same start and end node, is called a directed acyclic graph (DAG).  A graph with only undirected edges is called an undirected graph.  Since DAGs can be converted to undirected graphs via moralization, see \cite{Lauritzen1996}, in this paper, unless mentioned otherwise, we only study undirected graphs.
For any $s \in V$, ${\cal N}(s) = \{ t \in V : (s,t) \in E\}$
defines the \emph{neighborhood} of $s$ in the undirected graph $G = (V,E)$.  The \emph{degree} of a node $s$, denoted $d(s)$, is the number of neighbors of $s$.  A set of nodes ${\cal C}$ in an undirected graph is a \emph{clique} if all the nodes in ${\cal C}$ are connected to each other, i.e., all nodes in ${\cal C}$ have an undirected edge.
A random vector $\xb$ defined on an undirected graph $G$ is referred to as an \emph{undirected graphical model} or a \emph{Markov random field}.  
The edges in an undirected graph are used to specify a set of conditional independencies in the random vector $\xb$.  For any disjoint subsets $A,B,C$ of $V$, we say that $B$ separates $A$ and $C$ if all the paths between $A$ and $C$ pass through $B$.  For undirected graphical models, the global Markov property is defined as follows:
%\begin{definition}[Pairwise Markov Property]
%Any two non-adjacent nodes are conditionally independent given all other nodes: $x_s \perp x_t | x_{V \backslash \{s,t\}}$ if $(s,t) \notin E$.
%\end{definition}
%\begin{definition}[Local Markov Property]
%A node is conditionally independent of all other nodes given its neighbors: $x_s \perp x_{V \backslash \{s,N(s)\}} | x_{N(s)}$.
%\end{definition}
\begin{definition}[Global Markov Property]
\label{def:global_markov_property}
For subsets $A,B,C$ of $V$ such that $B$ separates $A$ and $C$, $x_A$ is conditionally independent of $x_C$ given $x_B$, i.e., $x_A \perp x_C | x_B$.
\end{definition}

By the Hammersley-Clifford theorem, the probability distribution $p(\xb)$ of Markov models is factored in terms of cliques as \cite{Besag1974}
\begin{equation}
 p(\xb) = \frac{1}{Z} \prod_{C \in \C} \psi_C(x_C) \,, \label{eq:dist_mrf} 
\end{equation}
where $\{\psi_C(x_C)\}_{C \in \C}$ are positive potential functions, also known as clique functions, that depend only on the variables in the clique $C \in \C$, and $Z$, the partition function, is a normalization constant.  
%For example, Fig. \ref{fig:ug} shows an undirected graph with cliques
%$\{1,3\},\{3,5\},\{5,6\},$ $\{4,6\},\{3,4\},\{2,4\},\{1,2\}$; thus the joint distribution factorizes as
%\begin{singlecolumn}
%\eq{
%p(\xb) = \frac{1}{Z} \psi_{1,3}(x_1,x_3) \psi_{3,5}(x_3,x_5) \psi_{3,4}(x_3,x_4)
%\psi_{5,6}(x_5,x_6) \psi_{4,6}(x_4,x_6) \psi_{2,4}(x_2,x_4) \psi_{1,2}(x_1,x_2) \,.
%\label{eq:example_ug}
%}
%\end{singlecolumn}
%\begin{doublecolumn}
%\begin{align}
%p(\xb) &= \frac{1}{Z} \psi_{1,3}(x_1,x_3) \psi_{3,5}(x_3,x_5) \psi_{3,4}(x_3,x_4)
%\psi_{5,6}(x_5,x_6) \nonumber \\
%& \qquad\times \psi_{4,6}(x_4,x_6) \psi_{2,4}(x_2,x_4) \psi_{1,2}(x_1,x_2) \,.
%\label{eq:example_ug}
%\end{align}
%\end{doublecolumn}

Throughout the paper, we assume that a given graph is connected, which means that there exists a path between any two nodes of the graph.  If this condition does not hold, we can always split the graph into more than one connected graph and separately study each connected graph.
A \emph{subgraph} of a graph $G = (V,E)$ is graph with vertices and edges being a subset of $V$ and $E$, respectively.  In the next Section, we review algorithms for doing inference in graphical models.

\subsection{Inference Algorithms}
\label{sec:inference_algorithms}

Inference in graphical models corresponds to finding marginal distributions, say $p(x_s)$, given the joint probability distribution $p(\xb)$ for $\xb = \{x_1,\ldots,x_n\}$.  All inference algorithms derived on $p(\xb)$ can be applied to the problem of estimation, where we want to marginalize the joint distribution $p(\xb | \yb)$ to find $p(x_s | \yb)$, where $\yb$ is a noisy observation of the random vector $\xb$.

For tree-structured graphs, belief propagation \cite{Pearl1988} is an efficient algorithm for inference with complexity linear in the number of nodes.  For graphs with cycles, as discussed in Section I, a popular method is to first construct a junction-tree and then apply belief propagation \cite{LauritzenSpiegelhalter1988}.  We now consider two examples that will act as running examples throughout the paper.

\begin{example}
\label{example_1}
Consider the undirected graph in Fig. \ref{fig:ex_ug_1} and its junction-tree shown in Fig. \ref{fig:ex_ug_jt_1}.  The clusters in the junction-tree are represented as ellipses (these are the cliques in the triangulated graph producing the junction-tree).  On the edges connecting clusters, we have separator nodes that correspond to the common nodes connecting two clusters.  It can be shown that this junction-tree is optimal, and thus the treewidth of the graph in Fig. \ref{fig:ex_ug_1} is three.
\end{example}

\begin{example}
\label{example_2}
By deleting the edge between nodes $3$ and $5$ in Fig.~\ref{fig:ex_ug_1}, we get the undirected graph in Fig. \ref{fig:ex_ug_2}.  The optimal junction tree is shown in Fig. \ref{fig:ex_ug_jt_2} (the separator nodes are ignored for simplicity).  The treewidth of the graph is two.
\end{example}

{
\begin{figure}
\begin{center}
\subfigure[Undirected graph]{
\begin{tikzpicture}[scale=0.7]
\tikzstyle{every node}=[draw,shape=circle,scale=0.5];
\path (-1,1) node (x1) {$1$};
\path (-1,0) node (x2) {$2$};
\path (0,1) node (x3) {$3$};
\path (0,0) node (x4) {$4$};
\path (1,2) node (x5) {$5$};
\path (1,1) node (x6) {$6$};
\path (1,0) node (x7) {$7$};
\path (2,1) node (x8) {$8$};
\path (2,0) node (x9) {$9$};
\draw (x1) -- (x2) -- (x4) -- (x3) -- (x1);
\draw (x3) -- (x5) -- (x8) -- (x9) -- (x7) -- (x4) -- (x6) -- (x3);
\draw (x6) -- (x7);
\draw (x6) -- (x8);
\end{tikzpicture}
\label{fig:ex_ug_1}
} \qquad
\subfigure[Junction tree for (a).]{
\begin{tikzpicture}[scale=0.7]
\tikzstyle{every node}=[draw,scale=0.5];
\path (-1,1.6) node[ellipse] (c1) {$2$ $3$ $4$};
\path (-1,0) node[ellipse] (c2) {$3$ $4$ $6$ $8$};
\path (-1,0.8) node[rectangle] (c0) {$3$ $4$};
\path (-1,-0.8) node[rectangle] (c3) {$3$ $8$};
\path (-1,-1.6) node[ellipse] (c4) {$3$ $5$ $8$};
\path (2,0) node[ellipse] (c5) {$7$ $4$ $6$ $8$};
\path (0.5,0) node[rectangle] (c6) {$4$ $6$ $8$};
\path (0.5,1.6) node[rectangle] (c7) {$2$ $3$};
\path (2,1.6) node[ellipse] (c8) {$1$ $2$ $3$};
\path (2,-0.8) node[rectangle] (c9) {$7$ $8$};
\path (2,-1.6) node[ellipse] (c10) {$7$ $8$ $9$};
\draw (c1) -- (c0) -- (c2) -- (c3) -- (c4);
\draw (c2) -- (c6) -- (c5);
\draw (c1) -- (c7) -- (c8);
\draw (c5) -- (c9) -- (c10);
\end{tikzpicture}
\label{fig:ex_ug_jt_1}
} \qquad
\subfigure[Undirected graph]{
\begin{tikzpicture}[scale=0.7]
\tikzstyle{every node}=[draw,shape=circle,scale=0.5];
\path (-1,1) node (x1) {$1$};
\path (-1,0) node (x2) {$2$};
\path (0,1) node (x3) {$3$};
\path (0,0) node (x4) {$4$};
\path (1,2) node (x5) {$5$};
\path (1,1) node (x6) {$6$};
\path (1,0) node (x7) {$7$};
\path (2,1) node (x8) {$8$};
\path (2,0) node (x9) {$9$};
\draw (x1) -- (x2) -- (x4) -- (x3) -- (x1);
\draw (x5) -- (x8) -- (x9) -- (x7) -- (x4) -- (x6) -- (x3);
\draw (x6) -- (x7);
\draw (x6) -- (x8);
\end{tikzpicture}
\label{fig:ex_ug_2}
} \qquad
\subfigure[Junction tree for (c).]{
\begin{tikzpicture}[scale=0.7]
\tikzstyle{every node}=[draw,scale=0.5];
\path (-1,-1) node[ellipse] (c1) {$1$ $2$ $3$};
\path (-1,0) node[ellipse] (c2) {$2$ $3$ $4$};
\path (-1,1) node[ellipse] (c3) {$3$ $4$ $6$};
\path (0.7,1) node[ellipse] (c4) {$4$ $6$ $7$};
\path (2.4,1) node[ellipse] (c6) {$6$ $7$ $8$};
\path (2.4,0) node[ellipse] (c7) {$7$ $8$ $9$};
\path (2.4,-1) node[ellipse] (c5) {$5$ $8$};
\draw (c1) -- (c2) -- (c3) -- (c4) -- (c6) -- (c7) -- (c5);
\end{tikzpicture}
\label{fig:ex_ug_jt_2}
}
\caption{Undirected graphs and their junction-trees.}
\label{fig:ug_example_1}
\end{center}
\vspace{-0.3cm}
\end{figure}
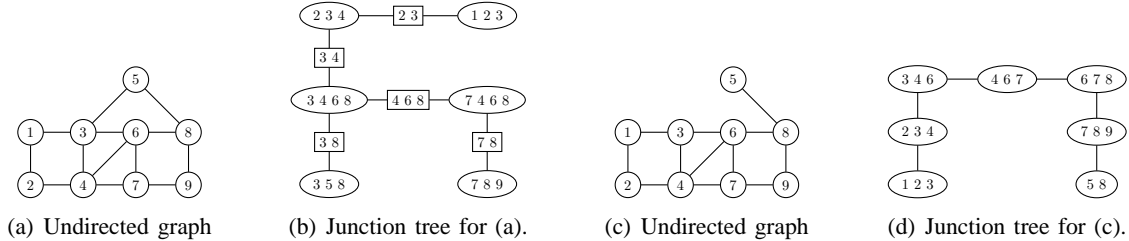
}

To do inference using junction-trees, we first associate potential functions with each clique.  This is done by grouping potentials from the original joint distribution and mapping them to their respective cliques. For example, in Example~\ref{example_1}, the junction tree has a clique $\{1,2,3\}$, so the clique function will be
$\Psi_{1,2,3} = \psi_{1,2}(x_1,x_2)\psi_{1,3}(x_1,x_3) \,,$
where $\psi_{1,2}(x_1,x_2)$ and $\psi_{1,3}(x_1,x_3)$ are factors in the original probability distribution.  Having defined potential functions for each clique, a message passing scheme, similar in spirit to belief propagation, can be formulated to compute marginal distributions for each clique \cite{LauritzenSpiegelhalter1988,ShaferShenoy1990}.  The marginal distribution of each node can be subsequently computed by marginalizing the distribution of the cliques.  The following theorem summarizes the time and space complexity of doing inference on junction trees.

\begin{theorem}[Complexity of inference using junction tree \cite{ShaferShenoy1990}]
\label{thm:complexity_jt}
For a random vector $\xb \in \R^n$ defined on an undirected graph $G = (V,E)$ with each $x_s$ taking values in $\Omega$, the time complexity for doing inference is exponential in the treewidth of the graph and the space complexity of doing inference is exponential in the treewidth of the graph plus one.
\end{theorem}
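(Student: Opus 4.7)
The plan is to reduce the inference problem to message passing on the junction-tree, and then count operations and storage directly. First I would invoke the junction-tree construction outlined in the background: triangulate $G$, extract its cliques, and form the tree of cliques. By definition of treewidth, the largest clique has size at most $\text{tw}(G)+1$, and every factor $\psi_C$ in the Hammersley-Clifford factorization \eqref{eq:dist_mrf} can be absorbed into some clique of the junction-tree, yielding clique potentials $\Psi_{C}$ whose product is still proportional to $p(\xb)$.

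Next I would describe the standard Shafer-Shenoy / Hugin propagation scheme. Pick an arbitrary clique as root; pass messages inward by, at each clique $C$ with child $C'$, forming the product of $\Psi_{C'}$ with the incoming messages from $C'$'s other neighbors and marginalizing out the variables in $C' \setminus (C \cap C')$; then pass messages outward symmetrically. After two sweeps, each clique marginal $p(x_C)$ equals $\Psi_{C}$ times the product of its incoming messages (up to normalization), and the single-node marginals $p(x_s)$ are obtained by a further marginalization inside any clique containing $s$.

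For the complexity bounds I would argue as follows. Every clique potential, message, and intermediate table is indexed by assignments to at most $\text{tw}(G)+1$ variables, each taking values in $\Omega$, so each such table has at most $|\Omega|^{\text{tw}(G)+1}$ entries — this gives the space bound. For time, the dominant cost is the pointwise multiplication and subsequent sum-marginalization performed at each clique, which scales linearly in the size of the largest table involved; this is $O(|\Omega|^{\text{tw}(G)+1})$ per clique. Since the junction-tree has $O(n)$ cliques and each edge carries $O(1)$ messages in a two-pass schedule, the total time is $O\!\left(n\,|\Omega|^{\text{tw}(G)+1}\right)$, which is exponential in the treewidth as claimed.

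The routine part is the bookkeeping; the main obstacle is making precise the bound that no intermediate quantity ever ranges over more than $\text{tw}(G)+1$ variables. This rests on the running-intersection property of the junction-tree, which guarantees that marginalizations only need to eliminate variables local to the current clique rather than variables accumulated from elsewhere in the tree; once that invariant is stated and verified, the exponent bounds follow immediately from the sizes of the clique tables.
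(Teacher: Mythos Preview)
Your argument is the standard Shafer--Shenoy analysis and is correct. However, there is nothing to compare against: the paper does not prove Theorem~\ref{thm:complexity_jt} at all. It is stated as a cited result from \cite{ShaferShenoy1990} and immediately followed by commentary, with no \texttt{proof} environment. Your sketch is essentially the argument one finds in that reference (build the junction-tree, run two-pass message propagation, bound every table by $|\Omega|^{\text{tw}(G)+1}$ via the running-intersection property), so it is appropriate as a self-contained justification, but it goes beyond what the paper itself supplies.
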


Theorem \ref{thm:complexity_jt} corresponds to the complexity of doing inference on an optimal junction tree.  However, finding the optimal junction-tree is hard in general, and thus the complexity is estimated by the upper bound of the treewidth of the graph, which can be found using algorithms in \cite{Bodlaender2010}.
The next Section introduces block-tree graphs as an alternative tree decomposition for graphs with cycles and shows that constructing block-tree graphs is less computationally intensive than constructing junction-trees and finding optimal block-trees has a smaller search space than finding optimal junction-trees.

\section{Block-Tree Graph}
\label{sec:block_tree_graph}
In this section, we introduce block-tree graphs and show the merits of using block-tree graphs over junction-trees.  
Section \ref{sec:definition_examples} defines a block-tree graph and gives examples.  Section \ref{sec:construction_block_tree} shows how to construct block-tree graphs starting from a connected undirected graph.
Section \ref{sec:optimal_block_tree} introduces optimal block-tree graphs.

\subsection{Definition and Examples}
\label{sec:definition_examples}

To define a block-tree graph, we first introduce block-graphs, which generalize the notion of graphs.
%\footnote{As a reminder, when talking about graphs, we always refer to undirected graphs, unless mentioned otherwise}.  
Throughout this paper, we denote block-graphs by ${\cal G}$ and graphs by $G$.  

\begin{definition}[Block-graph]
A \emph{block-graph} is the tuple ${\cal G} = ({\cal C},{\cal E})$, where ${\cal {C}} = \{C_1,C_2,\ldots,C_l\}$ is a set of disjoint clusters and ${\cal{E}}$ is a set of edges such that $(i,j) \in {\cal E}$ if there exists an edge between the clusters $C_i$ and $C_j$.
\end{definition}

Let the cardinality of each cluster be $\gamma_k = |C_k|$, and let $n$ be the total number of nodes.  If $\gamma_k = 1$ for all $k$, ${\cal G}$ reduces to an undirected graph.  For every block-graph ${\cal G}$, we associate a graph $G = (V,E)$, where $V$ is the set of all nodes in the graph and $E$ is the set of edges between nodes of the graph.  For each $(i,j) \in {\cal {\cal E}}$, the set of edges $E$ will contain at least one edge connecting two nodes in $C_i$ and $C_j$ or connecting nodes within $C_i$ or $C_j$.  A complete block-graph is defined as follows.

\begin{definition}[Complete block-graph]
For a block-graph ${\cal G} = ({\cal C},{\cal E})$, if all the nodes in $C_i$ have an edge between them, and for all $(i,j) \in E$ if all the nodes in $C_i$ and all the nodes in $C_j$ have an edge between them, then ${\cal G}$ is a complete block graph.
\end{definition}
We now introduce block-tree graphs.

\begin{definition}[Block-tree graph]
A block-graph ${\cal G} = ({\cal C},{\cal E})$ is called a block-tree graph if there exists only one path connecting any two clusters $C_i$ and $C_j$.
\end{definition}

Thus, block-tree graphs generalize tree-structured graphs.  Using Definition 3, we can define a complete block-tree graph.  As an example, consider the block-tree graph shown in Fig.~\ref{fig:ex_bt_1}, where $C_1 = \{1\}$, $C_2 = \{2,3\}$, $C_3 = \{4,5,6\}$, $C_4 = \{7,8\}$, and $C_5 = \{9\}$.  A complete block-tree graph corresponding to the block-tree in Fig.~\ref{fig:ex_bt_1} is shown in Fig.~\ref{fig:ex_ug_full}.  The block-tree graph in Fig.~\ref{fig:ex_bt_1} serves as a representation for a family of undirected graphs.  For example, Fig.~\ref{fig:ex_bt_1} serves as a representation for the graphs in Fig.~\ref{fig:ex_ug_1} and Fig.~\ref{fig:ex_ug_2}.  This can be seen by removing edges from Fig.~\ref{fig:ex_ug_full}.  In the next Section, we consider the problem of constructing a block-tree graph given an undirected graph.

\begin{figure}
\begin{center}
\subfigure[Block-tree]{
\begin{tikzpicture}[scale=0.7]
\tikzstyle{every node}=[draw,scale=0.5];
\path (-1,-1) node[ellipse] (c1) {$1$};
\path (-1,0) node[ellipse] (c2) {$2$ $3$};
\path (0.5,0) node[ellipse] (c3) {$4$ $5$ $6$};
\path (2.0,0) node[ellipse] (c4) {$7$ $8$};
\path (2.0,-1) node[ellipse] (c5) {$9$};
\draw (c1) -- (c2) -- (c3) -- (c4) -- (c5);
\end{tikzpicture}
\label{fig:ex_bt_1}
}
\subfigure[An undirected graph for (a)]{
\begin{tikzpicture}[scale=0.7]
\tikzstyle{every node}=[draw,scale=0.5];
\path (-2,0) node[circle] (c7) {$1$};
\path (-1,0.5) node[circle] (c4) {$3$};
\path (-1,-0.5) node[circle] (c8) {$2$};
\path (0,1) node[circle] (c1) {$5$};
\path (0,0) node[circle] (c5) {$6$};
\path (0,-1) node[circle] (c9) {$4$};
\path (1,0.5) node[circle] (c2) {$8$};
\path (1,-0.5) node[circle] (c6) {$7$};
\path (2,0) node[circle] (c3) {$9$};
\draw (c7) -- (c4) -- (c8) -- (c7);
\draw (c4) -- (c1) -- (c5) -- (c9) -- (c4);
\draw (c8) -- (c1) -- (c2) -- (c3) -- (c6);
\draw (c8) -- (c5);
\draw (c8) -- (c9);
\draw (c4) -- (c5) -- (c2) -- (c6) -- (c9) -- (c2);
\draw (c1) -- (c6) -- (c5);
\draw (c1) .. controls (0.6,0.6) and (0.6,-0.6) .. (c9);
\draw[color=white] (-3,0) -- (3,0);
\end{tikzpicture}
\label{fig:ex_ug_full}
}
\subfigure[Junction tree for Fig. \ref{fig:ex_ug_full}]{
\begin{tikzpicture}[scale=0.6]
\tikzstyle{every node}=[draw,scale=0.5];
\path (-2,-2) node[ellipse] (c1) {$1$ $2$ $3$};
\path (-2,-1) node[rectangle] (c1s) {$2$ $3$};
\path (-2,0) node[ellipse] (c2) {$2$ $3$ $4$ $5$ $6$};
\path (0,0) node[rectangle] (c2s) {$4$ $5$ $6$};
\path (2,0) node[ellipse] (c3) {$4$ $5$ $6$ $7$ $8$};
\path (2,-1) node[rectangle] (c3s) {$7$ $8$};
\path (2,-2) node[ellipse] (c4) {$7$ $8$ $9$};
\draw (c1) -- (c1s) -- (c2) -- (c2s) -- (c3) -- (c3s) -- (c4);
\end{tikzpicture}
\label{fig:ex_ug_full_jt}
}
\subfigure[Block-tree for Fig. \ref{fig:ex_ug_2}]{
\begin{tikzpicture}[scale=0.9]
\tikzstyle{every node}=[draw,scale=0.5];
\path (-2,-1.6) node[ellipse] (c1) {$1$};
\path (-2,-1) node[ellipse] (c2) {$2$ $3$};
\path (-1,-1) node[ellipse] (c3) {$4$ $6$};
\path (0,-1) node[ellipse] (c4) {$7$ $8$};
\path (0,-0.4) node[ellipse] (c5) {$5$};
\path (0,-1.6) node[ellipse] (c6) {$9$};
\draw (c1) -- (c2) -- (c3) -- (c4) -- (c5);
\draw (c4) -- (c6);
\draw[color=white] (-3,0) -- (1,0);
\end{tikzpicture}
\label{fig:ex_bt_2}
}
\caption{Example of block-trees}
\label{fig:bt}
\end{center}
\vspace{-0.3cm}
\end{figure}
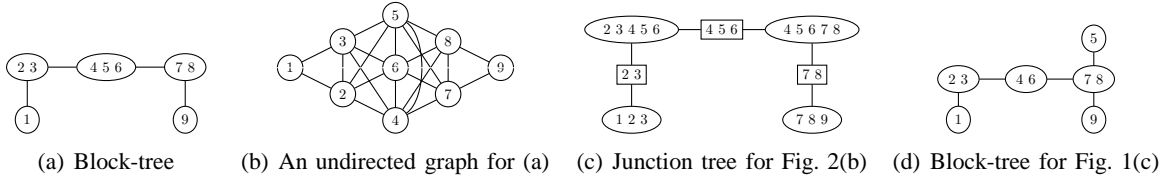

\subsection{Constructing Block-Tree Graphs}
\label{sec:construction_block_tree}

{
\begin{algorithm}
\caption{Constructing Block-tree Graphs}\label{alg:blocktree}
\begin{algorithmic}[1]
\Procedure{ConstructBlockTree}{$G,V_1$}
\State $r = 1$ ; $V_0 = \{\;\}$;
\While{$\bigcup V_r \ne V$}
\State $r = r + 1$
\State Find neighbors: $V_{r} = 
\{k : (j,k) \in E \; \forall\;,j \in V_{r-1}\} \backslash \{V_{r-2} \cup V_{r-1}\}$
\State Split Cluster: $\{V_r^{1}\,\ldots,V_r^{m_r}\}$ s.t. for all $i \in V_r^{n_1}$ and $j \in V_r^{n_2}$, $n_1 \ne n_2$, $(i,j) \notin E$.
\EndWhile
\For{$i = r,r-1,\ldots,3$}
\For{$j = 1,\ldots, m_i$}
%\State $\hspace{-0.4cm}\{V_{i-1}^1,\ldots,V_{i-1}^{m_{i-1}}\}$=UpdateClusters($V_{i}^j,V_{i-1},E$)
\State Update cluster: Find $\{V_{i-1}^{j_1},\ldots,V_{i-1}^{j_w}\}$ s.t. there exists nodes $s_1,\ldots,s_{w}$, where $s_k \in V_{i-1}^{j_k}$, s.t. $(s_k,t) \in E$ for some $t \in V_{i}^j$.  Combine $\{V_{i-1}^{j_1},\ldots,V_{i-1}^{j_w}\}$ into one cluster and update $V_{i-1}$.
\EndFor
\EndFor
\State Relabel clusters as $C_1,\ldots,C_l$ and find edge set ${\cal E}$ s.t. $(i,j) \in {\cal E}$ if there exists an edge between $C_i$ and $C_j$.
\EndProcedure
\end{algorithmic}
\end{algorithm}
}

Our algorithm for constructing block-tree graphs is outlined in Algorithm \ref{alg:blocktree}.  The input to the algorithm is a connected graph $G$ and an initial cluster $V_1$, which we call the \emph{root cluster}.  The output of the algorithm is a block-tree graph ${\cal G} = ({\cal C},{\cal E})$.  The key steps of the algorithm are highlighted as follows:

\noindent
\textbf{Forward Pass:} (Lines 3-7) Starting from the root cluster $V_1$,
we iteratively find successive neighbors of $V_1$ to construct a sequence of $r$ clusters $V_1,V_2,\ldots,V_r$ such that $V_2 = {\cal N}(V_1) \backslash V_1$, $V_3 = {\cal N}(V_2) \backslash \{V_1 \cup V_2\},$ $\ldots,$ $V_r = {\cal N}(V_{r-1}) \backslash \{V_{r-2} \cup V_{r-1}\}$, where ${\cal N}(V_k)$ are the neighbors of the set of nodes in $V_k$.  This is shown in Line 5 of Algorithm \ref{alg:blocktree}.  For each $V_k$, $k = 2,\ldots, r$, we split $V_k$ into $m_k$ disjoint clusters $\{V_k^1,\ldots,V_k^{m_k}\}$ such that $\bigcup V_k^j = V_k$ and there are no edges between the clusters $V_1^i$ and $V_1^j$ for $i \ne j$.  This is shown in Line 6 of Algorithm \ref{alg:blocktree}.

\noindent
\textbf{Backwards Pass:} (Lines 8-13)  In this step, we find the final clusters of nodes given $V_1,V_2,\ldots,V_r$.  Starting at $V_r = \{V_r^1,\ldots,V_r^{m_r}\}$, for each $V_r^j$, $j = 1,\ldots,m_r$, we find all clusters $\{V_{r-1}^{j_1},\ldots,V_{r-1}^{j_w}\}$ such that there exists an edge between $V_{r-1}^{j_n}$, $n = 1,\ldots,w$ and $V_{r}^j$.  Combine $\{V_{r-1}^{j_1},\ldots,V_{r-1}^{j_w}\}$ into one cluster and then update the clusters in $V_{r-1}$ accordingly.  We repeat the above steps for all clusters $V_{r-1},\ldots, V_3$.  Thus, if $r = 2$, the backwards step is not needed.  Relabel all the clusters such that ${\cal C} = \{C_1,\ldots,C_l\}$ and find the edge set ${\cal E}$.

The forward pass of the algorithm first finds a chain structured block-graph over the clusters $V_1$,$V_2$,$\ldots,$ $V_r$.  The backwards pass then splits the clusters in $V_k$ to get a tree-structured graph.  The key intuition utilized in the backwards pass is that each cluster in $V_{k}$ connects to only one cluster in $V_{k-1}$.  If there are more than one such clusters in $V_{k-1}$, it is trivial to see that the resultant block-graph will have a cycle and will no longer be a block-tree graph.  

As an example, consider finding block-tree graphs for the graph in Fig.~\ref{fig:ex_ug_1}.  Starting with the root cluster $V_1 = \{1\}$, we have $V_2 = \{2,3\}$, $V_3 = \{4,5,6\}$, $V_4 = \{7,8\}$, and $V_5 = \{9\}$.  Further splitting the clusters $V_k$ and running the backwards pass, the clusters do not split, and we get the block-tree graph in Fig.~\ref{fig:ex_bt_1}.  We get the same block-tree graph if we start from the root cluster $V_1 = \{9\}$.  Now suppose, we start with the root cluster $V_1 = \{2,3\}$.  Then, we have $V_2 = \{1,4,5,6\}$, $V_3 = \{7,8\}$, and $V_4 = \{9\}$.  Splitting these clusters (Line 6 in Algorithm 1), we have $V_2^1 = \{1\}$, $V_2^2 = \{4,6\}$, $V_2^3 = \{5\}$, $V_3^1 = \{7\}$, $V_3^2 = \{8\}$, and $V_4^1 = \{4\}$.  Given these clusters, we now apply the backwards pass to find the final set of clusters:
\begin{enumerate}
\item The cluster $V_4^1 = \{4\}$ has edges in both $V_3^1 = \{7\}$ and $V_3^2 = \{8\}$, so we combine $V_3^1$ and $V_3^2$ to get $V_3^1 = \{7,8\}$.
\item Next, the cluster $V_3^1 = \{7,8\}$ has edges in $V_2^2$ and $V_2^3$, so we combine these clusters and get $V_2^1 = \{1\}$ and $V_2^{2} = \{4,5,6\}$.
\end{enumerate}
Given the above clusters, we get the same block-tree graph in Fig.~\ref{fig:ex_bt_1}.  The need of the backwards pass in Algorithm~\ref{alg:blocktree} is clear from the above example since it successfully splits the cluster $V_2$ with four nodes into two smaller clusters.  As another example, the block-tree graph for the graph in Fig.~\ref{fig:ex_ug_2} using a root cluster of $V_1 = \{1\}$ is shown in Fig.~\ref{fig:ex_bt_2}.

Notice that in Algorithm~\ref{alg:blocktree} we did not split the root cluster $V_1$.  Thus, one of the clusters in ${\cal C}$ will be $V_1$.  Without loss in generality, we assume $C_1 = V_1$.  We now show that Algorithm~\ref{alg:blocktree} always gives us a unique block-tree graph for each set of root cluster $V_1$.

\begin{theorem}
\label{thm:unique_bt}
Algorithm \ref{alg:blocktree} always outputs a block-tree graph ${\cal G} = ({\cal C},{\cal E})$ for each possible set of root cluster $V_1$ and undirected graph $G = (V,E)$, which is connected.  Further, the block-tree graph ${\cal G}$ is unique.
\end{theorem}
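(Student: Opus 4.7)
The plan is to break the proof into three parts: (a) the forward pass always terminates and its layers $V_1, V_2, \ldots, V_r$ form a partition of $V$; (b) after splitting and the backwards pass, the resulting block-graph is connected and acyclic, hence a tree; (c) every step is deterministic given $V_1$, so the output is unique.

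For part (a), I would show by induction on $k$ that $V_k$ equals the set of nodes at graph distance exactly $k-1$ from $V_1$. The base case $V_1$ is immediate. In the inductive step, the assignment $V_k = \mathcal{N}(V_{k-1}) \setminus (V_{k-2} \cup V_{k-1})$ together with the induction hypothesis (all nodes of distance $< k-1$ already lie in $V_1 \cup \cdots \cup V_{k-1}$) gives exactly the BFS layer at distance $k-1$. Connectedness of $G$ then forces $\bigcup_r V_r = V$ after finitely many steps, so the while-loop terminates and the $V_k$'s are pairwise disjoint. An immediate structural corollary, which drives the rest of the argument, is the \emph{layering property}: if $u \in V_k$ and $v \in V_j$ with $|k-j| \geq 2$, then $(u,v) \notin E$, since otherwise $v$ would have been absorbed into an earlier layer.

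For part (b), the splitting step (Line 6) refines each $V_k$ into the connected components of the induced subgraph $G[V_k]$, which is a canonical partition. I then claim the invariant that after the backwards pass processes level $i$, every cluster at level $i$ has exactly one neighbor cluster in level $i-1$. This holds by construction: for each $V_i^j$ we collect all sub-clusters of $V_{i-1}$ that touch $V_i^j$ and merge them. Once this has been carried out for all $i$ from $r$ down to $3$, combined with the layering property (no edges skip a layer), every non-root cluster has a unique parent. Acyclicity then follows because any cycle in the block-graph would have to use either an edge between non-adjacent layers (impossible) or force some cluster to have two parents in the layer above (ruled out). Connectedness of the block-graph is inherited from connectedness of $G$: following parent pointers from any cluster back to $V_1 = C_1$ produces a path. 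A connected acyclic graph is a tree.

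For part (c), uniqueness follows because each step is deterministic: the BFS layers are uniquely determined by $V_1$ and $G$; the split into connected components within each layer is canonical; and the backwards merges are a deterministic union-of-components performed in a fixed order. The main obstacle I anticipate is verifying carefully that the top-down backwards order (from $i = r$ down to $i = 3$) preserves the ``unique parent'' invariant for every layer simultaneously — specifically, that merging sub-clusters at layer $i-1$ (performed while processing level $i$) cannot destroy the unique-parent property already established for level $i$. A brief check shows it cannot: if two sub-clusters at level $i-1$ are merged, any cluster at level $i$ that was adjacent to one of them still sees a single parent (the merged cluster), and any cluster at level $i$ adjacent to neither is unaffected. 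This monotonicity argument, together with a short induction on the sweep, closes the proof.
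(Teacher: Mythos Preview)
Your proposal is correct and considerably more careful than the paper's own proof. The paper argues treeness via an edge count: it asserts that the block-graph is connected (inherited from $G$) and that the number of inter-cluster edges equals $l-1$, where $l$ is the number of clusters, hence a tree; uniqueness is dismissed as ``by construction.'' Your route instead establishes the BFS-layer structure explicitly, derives the no-skip layering property, and proves that after the backwards sweep every non-root cluster has a unique parent, from which acyclicity and connectedness give a tree. These are really the same idea viewed differently: the paper's unproved claim that there are exactly $l-1$ edges is precisely the unique-parent property you prove, so your argument can be read as filling in what the paper leaves implicit. Your treatment of the order-of-merging subtlety (that merging at level $i-1$ cannot destroy the unique-parent property already secured at level $i$) and your explicit determinism argument for uniqueness are genuine additions that the paper does not supply. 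One small point worth tightening: Line~6 as written only requires a partition of $V_k$ with no edges crossing sub-clusters, which is not literally unique; your identification of this with the connected components of $G[V_k]$ is the intended (finest) reading and is needed for your uniqueness claim, so you should state that convention explicitly.
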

\begin{proof}
For the root cluster $V_1$, after the backwards pass of the algorithm, we have the set of clusters: $V_1, \{V_2^1,V_2^2,\ldots,V_2^{m_2} \}, \ldots,
\{V_r^1,V_r^2,\ldots,V_r^{m_r} \}$.
By construction, there are no edges between $V_k^{n_1}$ and $V_k^{n_2}$ for $n_1 \ne n_2$.  The total number of clusters is $l = 1+\sum_{k=2}^{r} m_k$.  For ${\cal G} = ({\cal C},{\cal E})$ to be a block-tree graph, the undirected graph $G = (\{1,2,\ldots,l\},{\cal E})$ to be a 
tree-structured graph.  For this, $G$ must be connected and the number of edges in the graph must be $|{\cal E}| = l-1$.  The block-tree graph formed using Algorithm \ref{alg:blocktree} is connected by construction since the original undirected graph $G = (V,E)$ is connected.  Counting the number of edges between clusters, we have $l-1$ edges, and thus the output of Algorithm \ref{alg:blocktree} is a block-tree graph.  The uniqueness of the block-tree graph follows from construction.
\end{proof}

The next theorem characterizes the complexity of constructing block-tree graphs.

\begin{theorem}[Complexity of Algorithm \ref{alg:blocktree}]
The complexity of constructing a block-tree graph is $O(n)$, where $n$ is the number of nodes in the graph.
\end{theorem}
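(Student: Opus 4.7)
The plan is to bound the work of each of the two passes of Algorithm~\ref{alg:blocktree} by the size of the graph, arguing that every node is inspected a constant number of times and every edge is traversed a constant number of times, so that the total cost is linear in $|V| + |E|$; under the standing assumption that the graph is given in sparse (adjacency-list) form with $|E| = O(n)$, as is standard in this setting, this is $O(n)$.

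For the forward pass (Lines~3--7), I would recognize it as a breadth-first search rooted at $V_1$. By the definition $V_k = {\cal N}(V_{k-1}) \setminus (V_{k-2} \cup V_{k-1})$, every node of $V$ is placed into exactly one level $V_k$; hence the union $\bigcup_k V_k$ is the partition of $V$ discovered by BFS, and each node is examined $O(1)$ times in the neighbor-expansion step. Each edge $(u,v)$ is scanned only when $u$ or $v$ is first expanded from its level, for a total of $O(|E|)$ work. The cluster-splitting step on Line~6 asks for the partition of $V_k$ into components that have no intra-$V_k$ edges; I would implement this as a connected-components search on the subgraph of $G$ induced by $V_k$ (using only the edges with both endpoints in $V_k$), which takes time $O(|V_k| + |E_k|)$. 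Summed over $k$ these costs telescope to $O(|V| + |E|)$, because the induced subgraphs on the levels are edge-disjoint subsets of $E$.

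For the backwards pass (Lines~8--13), the task at each level is: for every cluster $V_i^j$, merge all clusters $V_{i-1}^{j_1},\ldots,V_{i-1}^{j_w}$ in the previous level that contain a neighbor of some node in $V_i^j$. I would implement merging with a union-find structure over the current clusters, and I would discover the merges by a single scan through the edges that cross from level $i$ to level $i-1$. Each such cross-level edge is examined exactly once over the whole backwards pass, and each union/find operation is effectively constant-time (inverse-Ackermann, treated as $O(1)$ in this setting), so the backwards pass also costs $O(|V| + |E|)$. The final relabeling and edge-set construction (Line~14) is a single pass over clusters and cross-cluster edges, again $O(|V| + |E|)$.

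The main obstacle is the bookkeeping needed to make the two edge-counting arguments tight: one has to verify that (a) each edge of $E$ is charged only once in the forward BFS and at most once in the induced-subgraph component computations of Line~6, and (b) each cross-level edge is charged only once during the backwards merging, so that the union-find merges do not blow up the count. Once these charging arguments are in place, the total running time is $O(|V| + |E|)$, which under the usual sparse-graph convention adopted throughout the paper is $O(n)$, yielding the theorem.
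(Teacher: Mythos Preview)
Your proposal is correct and follows the same basic approach as the paper---bounding the work by the two traversals of the graph---though your analysis is substantially more detailed and rigorous than the paper's one-line proof, which simply asserts the claim is ``trivial since the algorithm involves traversing the nodes of the graph'' twice. You also correctly make explicit the sparse-graph assumption $|E| = O(n)$ needed to reduce $O(|V|+|E|)$ to $O(n)$, which the paper leaves implicit.
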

\begin{proof}
The proof is trivial since the algorithm involves traversing the nodes of the graph.  We do this twice, once during the forward pass and once during the backwards pass.
\end{proof}

\noindent
\textbf{Comparison to junction-trees:}  As mentioned before, the key difference between block-trees and junction-trees is that block-trees are constructed using disjoint clusters, whereas clusters in a junction-tree have common nodes.  Constructing block-trees is computationally more efficient since constructing junction-trees requires an additional complexity of $O(m^2)$, where $m$ can be as large as $n$ for sparse graphs.  From Algorithm~\ref{alg:blocktree} and Theorem~\ref{thm:unique_bt}, we note that a block-tree graph is uniquely specified using a root cluster, which is a small number of nodes.  On the other hand, specifying a junction-tree requires an elimination order, the size of which can be as large\footnote{The exact size of the elimination order depends on the connectivity of the graph} as $n$.

\subsection{Optimal Block-Tree Graphs}
\label{sec:optimal_block_tree}

In this Section, we consider the problem of finding optimal block-tree graphs.  In order to define an optimal block-tree graph, we first introduce the notion of block-width and block-treewidth.

\begin{definition}[Block-width]
\label{def:block_width}
For an undirected graph $G = (V,E)$, the \emph{block-width} of the graph with respect to a root cluster $V_1$, $\text{bw}(G,V_1)$, is the maximum cluster size in the block-tree graph constructed using $V_1$ as the root cluster: 
$\text{bw}(G,V_1) = \max_{k} |\gamma_k| \,.$
\end{definition}
\begin{definition}[Block-treewidth]
\label{def:btw}
The \emph{block-treewidth}, $\text{btw}(G)$, of an undirected graph 
$G = (V,E)$ is the minimal block-width of a graph $G$ with respect to all root clusters:
$\text{btw}(G) = \min_{V_1 \subset V} \text{bw}(G,V_1) \,.$
\end{definition}

For example, $\text{bw}\left(G,\{1\}\right) = 3$ for the graph in Fig. \ref{fig:ex_ug_1}.  By checking over all possible root clusters, it is easy to see that the block-treewidth for the graph in Fig.~\ref{fig:ex_ug_1} is also three.  For Fig. \ref{fig:ex_ug_2}, $\text{bw}(G,\{1\}) = 2$ which is also the block-treewidth of the graph.  We can now define an optimal block-tree graph.

\begin{definition}[Optimal block-tree graph]
\label{def:optimal_bt}
A block-tree graph for an undirected graph $G = (V,E)$ with respect to a root cluster $V_1$ is optimal if the block-width with respect to $V_1$ is equal to the block-treewidth of the graph, i.e., $\text{bw}(G,V_1) = \text{btw}(G)$.
\end{definition}

We show in Section \ref{sec:spanning_bt} that the notion of optimality for block-tree graphs in Definition \ref{def:optimal_bt} is useful when finding spanning block-trees, which are subgraphs with lower block-treewidth.  Computing the block-treewidth of a graph requires a search over all possible root clusters, which has complexity of $O(2^n)$.  This search space can be simplified since if we choose a $V_1$ such that $|V_1| \ge \lceil n/2 \rceil$, the block-width of the graph will be $V_1$ itself.  Thus, the search space can be restricted to root clusters of length $\lceil n/2 \rceil$, which requires a search over $n \choose n/2$ number of possible clusters.  In comparison, computing the treewidth requires finding an optimal elimination order, which requires a search over $n!$ possible number of combinations.  Since 
$n! \gg {n \choose \lceil n/2 \rceil}$, the search space of computing the treewidth is much larger than the search space of computing the block-treewidth.  However, the problem of computing the block-treewidth is still computationally intractable as the search space grows exponentially as $n$ increases.

We now propose a simple heuristic to find an upper bound on the block-treewidth of a graph.  Let $G = (V,E)$ be an undirected graph with $n$ nodes.  Instead of searching over all possible root clusters, whose maximum size can be $\lceil n/2 \rceil$, we restrict the search space to smaller root clusters.  For $n$ small, we find the best root cluster of size two, and for $n$ large we find the best root cluster of size one.  Given the initial choice of the root cluster, we add nodes to this to see if the block-width can be lowered further.  For example, if the initial root cluster is $V_1$, we check over all $k \in V \backslash V_1$ and see if $\{V_1,k\}$ leads to a lower block-width.  We repeat this process until the block-width does not decrease further.  For small $n$, the complexity of this heuristic algorithm is $O(n^2)$, since we initially search over all clusters of size two.  For large $n$, the complexity is $O(n)$ since we search only over clusters of size one.  Table~\ref{table:tw_vs_btw} compares upper bounds on the treewidth vs. upper bounds on the block-treewidth for some standard graphs used in the literature\footnote{The graphs were obtained from the database in people.cs.uu.nl/hansb/treewidthlib/}.  
The upper bound on the treewidth is computed using a software package\footnote{See www.treewidth.com}.

\begin{table}
\caption{Upper bound on block-treewidth vs upper bound on treewidth} % title of Table
\centering % used for centering table
\begin{tabular}{c c c c c} % centered columns (4 columns)
\hline\hline %inserts double horizontal lines
Graph & Treewidth & Block-treewidth & nodes & edges \\ [0.5ex]
\hline % inserts single horizontal line
ship-ship-pp & 8 & 8 & 30 & 77 \\ % inserting body of the table
water & 10 & 8 & 32 & 123 \\
fungiuk & 4 & 4 & 15 & 36 \\
pathfinder-pp & 7 & 6 & 12 & 43 \\ % [1ex] adds vertical space
1b67 & 17 & 16 & 68 & 559 \\
1bbz & 28 & 23 & 57 & 543 \\
1bkb & 34 & 29 & 131 & 1485\\
1bkf & 39 & 37 & 106 & 1264\\
1bx7 & 11 & 11 & 41 & 195 \\
1en2 & 17 & 16 & 69 & 463\\
1on2 & 40 & 34 & 135 & 1527\\
$n \times n$ grid graph & $n$ & $n$ & $n^2$ & $2n(n-1)$\\ [1ex]
\hline \hline %inserts single line
\end{tabular}
\label{table:tw_vs_btw} % is used to refer this table in the text
\end{table}

\section{Inference Using Block-Tree Graphs}
\label{sec:inference_bt}

In this Section, we outline an algorithm for inference in undirected graphical models using block-tree graphs. 
The algorithm is similar to belief propagation with the difference that message passing happens between clusters of nodes instead of individual nodes.  Let $\xb \in \R^n$ be a random vector defined on an undirected graph $G = (V,E)$.  Using $V_1$ as a root cluster, suppose we construct the block-tree graph ${\cal G} = ({\cal C},{\cal E})$ using Algorithm \ref{alg:blocktree}, where ${\cal C} = [C_1,C_2,\ldots,C_l]$ and $\gamma_k = |C_k|$.  From (\ref{eq:dist_mrf}), we know that $p(\xb)$ admits the factorization over cliques such that
\begin{equation}
p(\xb) = \frac{1}{Z} \prod_{C \in \C} \psi_C(x_C) \,, \label{eq:dist_mrf_1}
\end{equation}
where $\C$ is the set of cliques.
Using the block-tree graph, we can express the factorization of $p(\xb)$ as
\begin{equation}
p(\xb) = \frac{1}{Z} \prod_{(i,j) \in {\cal E}} \Psi_{i,j}\left(x_{C_i},x_{C_j}\right) \,,
\end{equation}
where the factors $\Psi_{i,j}\left(x_{C_i},x_{C_j}\right)$ correspond to a product of potential functions taken from the factorization in (\ref{eq:dist_mrf_1}), where each $\psi_C(x_C)$ is mapped to a unique $\Psi_{i,j}\left(x_{C_i},x_{C_j}\right)$.

As an example, consider a random vector $\xb \in \R^9$ defined on the graphical model in Fig. \ref{fig:ex_ug_2}.  The block-tree graph is given in Fig. \ref{fig:ex_bt_2} such that $C_1 = \{1\}$, $C_2 = \{2,3\}$, $C_3 = \{4,6\}$, $C_4 = \{7,8\}$, $C_5 = \{5\}$, and $C_6 = \{9\}$.  Using Fig. \ref{fig:ex_ug_2}, the joint probability distribution can be written as
\begin{equation}
p(\xb) = \psi_{1,2}\psi_{1,3} \psi_{2,4} \psi_{3,4}\psi_{3,6} 
\psi_{4,6}
\psi_{4,7}\psi_{6,7}\psi_{6,8}\psi_{7,9}\psi_{8,9} \psi_{8,5} \,, \label{eq:xb_joint_1}
\end{equation}
where we simplify $\psi_{i,j}(x_i,x_j)$ as $\psi_{i,j}$.
We can rewrite (\ref{eq:xb_joint_1}) in terms of the block-tree graph as
\[p(\xb) = \Psi_{1,2}(x_{C_1},x_{C_2})
\Psi_{2,3}(x_{C_2},x_{C_3})
\Psi_{3,4}(x_{C_3},x_{C_4}) 
\Psi_{4,5}(x_{C_4},x_{C_5})
\Psi_{4,6}(x_{C_4},x_{C_6}) \,,\]
where\;\;
$\Psi_{1,2}(x_{C_1},x_{C_2}) = \psi_{1,3}\psi_{1,2}$,\;\;\;\;
$\Psi_{2,3}(x_{C_2},x_{C_3}) = \psi_{3,6}\psi_{2,4}\psi_{4,6}$,\;\;\;\;
$\Psi_{2,4}(x_{C_3},x_{C_4}) = \psi_{6,8}\psi_{4,7}\psi_{6,7}$,\\
$\Psi_{4,5}(x_{C_4},x_{C_5}) = \psi_{8,5},$ and
$\Psi_{4,6}(x_{C_4},x_{C_6}) = \psi_{7,9}\psi_{8,9}$.
Since the block-tree graph is a tree decomposition, all algorithms valid for tree-structured graphs can be directly applied to block-tree graphs.  Thus, we can now use the belief propagation algorithm discussed in Section \ref{sec:inference_algorithms} to do inference on block-tree graphs.  The steps involved are similar, with an additional step to marginalize the joint distributions over each cluster:

\begin{enumerate}
\item For any cluster, say $C_1$, identify its leaves.
\item Starting from the leaves, pass messages along each edge until we reach the root cluster $C_1$:
\begin{equation}
m_{i \rightarrow j}(x_{C_j}) = \sum_{x_{C_i}} \Psi_{i,j}\left(x_{C_i},x_{C_j}\right)
\prod_{e \in {\cal N}(i) \backslash j} m_{e \rightarrow i}(x_{C_i}) \,, \label{eq:rrr}
\end{equation}
where ${\cal N}(i)$ is the neighboring cluster of $C_i$ and 
$m_{i \rightarrow j}(x_{C_j})$ is the message passed from cluster $C_i$ to $C_j$.
\item Once all messages have been communicated from the leaves to the root, pass messages from the root back to the leaves.
\item After the messages reach the leaves, the joint distribution for each cluster is given as
\begin{equation}
p(x_{C_i}) = \prod_{j \in {\cal N}(i)} m_{j \rightarrow i} (x_{C_i}) \,.
\end{equation}
\item To find the distribution of each node, marginalize $p(x_{C_i})$.
\end{enumerate}

We now analyze the complexity of doing inference using block-tree graphs.  Assume $x_s \in \Omega$, where $s \in V$ such that $|\Omega| = K$.  From (\ref{eq:rrr}), to pass a message from $C_i$ to $C_j$, for each $x_{C_j} \in \Omega^{|C_j|}$, we require $K^{|C_i|}$ number of additions.  Thus, this step will require $K^{|C_i| + |C_j|}$ number of additions since we need to compute $m_{i \rightarrow j}(x_{C_j})$ for all possible values $x_{C_j}$ takes.  Thus, the complexity of inference is given as follows:

\begin{theorem}[Complexity of inference using block-tree graphs]
\label{thm:complexity_bt}
For a random vector $\xb \in \R^n$ defined on an undirected graph $G = (V,E)$ with each $x_s$ taking values in $\Omega$, the complexity of performing inference is exponential in the maximum sum of cluster sizes of adjacent clusters.
\end{theorem}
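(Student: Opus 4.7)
The plan is to directly cost out the message passing scheme in equations (\ref{eq:rrr}) and the steps following it, then identify the dominant term. Since the block-tree graph $\mathcal{G} = (\mathcal{C},\mathcal{E})$ is tree-structured, belief propagation terminates after passing exactly one message in each direction along every edge, i.e., $2|\mathcal{E}| = 2(l-1)$ messages total. Hence the total complexity is the sum of the per-message complexities, and it suffices to bound the cost of a single message and then take a worst case over edges.

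First I would fix an edge $(i,j) \in \mathcal{E}$ and analyze (\ref{eq:rrr}). The message $m_{i\to j}(x_{C_j})$ is a function on $\Omega^{|C_j|}$, so it has $K^{|C_j|}$ entries to compute. For each fixed value of $x_{C_j}$, the right-hand side is a sum over $x_{C_i} \in \Omega^{|C_i|}$, i.e., $K^{|C_i|}$ terms; each term is a product of $\Psi_{i,j}$ and the already-computed incoming messages $m_{e\to i}$ (a constant number of factors, bounded by $d(i)$ in the block-tree). Thus computing $m_{i\to j}$ costs $O\!\left(d(i)\cdot K^{|C_i|+|C_j|}\right)$ operations, which is $\Theta(K^{|C_i|+|C_j|})$ up to a polynomial factor in $n$ that is absorbed into the ``exponential in'' statement.

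Next I would sum the per-edge cost over the whole schedule. Summing $2\sum_{(i,j)\in\mathcal{E}} K^{|C_i|+|C_j|}$ is bounded above by $2(l-1)\cdot K^{M}$, where
\[
M \;=\; \max_{(i,j)\in\mathcal{E}} \bigl(|C_i|+|C_j|\bigr).
\]
Because $l \le n$, the prefactor is polynomial in $n$, so the overall cost is $O\!\left(\mathrm{poly}(n)\cdot K^{M}\right)$, which is exponential in $M$, i.e., in the maximum sum of cluster sizes of adjacent clusters. To finish, I would note that the final marginalization step $p(x_s) = \sum_{x_{C_i}\setminus\{x_s\}} p(x_{C_i})$ for $s\in C_i$ costs only $K^{|C_i|} \le K^{M}$ per node and is therefore absorbed into the same bound. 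A matching lower bound (so that the exponent really is $M$ and not smaller) follows because the edge achieving the maximum has to instantiate every joint assignment $(x_{C_i},x_{C_j})$ at least once in forming $\Psi_{i,j}$.

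I do not expect any serious obstacle: once (\ref{eq:rrr}) is written down, the argument is essentially bookkeeping on the tree. The only subtle point is making sure the grouping of the original clique potentials $\psi_C$ into the block potentials $\Psi_{i,j}$ does not by itself blow up the cost; this is handled by observing that each $\psi_C$ is assigned to a \emph{unique} $\Psi_{i,j}$ (as stated just after (\ref{eq:dist_mrf_1})), so the total work to build all the $\Psi_{i,j}$'s is bounded by $\sum_C K^{|C|}$, which in turn is dominated by $K^{M}$ since any clique $C$ used in $\Psi_{i,j}$ lies inside $C_i\cup C_j$.
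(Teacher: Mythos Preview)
Your proposal is correct and follows essentially the same approach as the paper: the paper's justification (given in the paragraph immediately preceding the theorem) simply observes that computing $m_{i\to j}$ from (\ref{eq:rrr}) requires $K^{|C_i|}$ additions for each of the $K^{|C_j|}$ values of $x_{C_j}$, hence $K^{|C_i|+|C_j|}$ per message, and stops there. Your write-up is in fact more careful than the paper's---you account for the number of edges, the degree factor, the final marginalization, the cost of assembling the $\Psi_{i,j}$'s, and a matching lower bound---but the core argument is identical.
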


Another way to realize Theorem \ref{thm:complexity_bt} is to form a junction-tree using the block-tree graph.  It is clear that, in general, using block-tree graphs for inference is computationally less efficient than using junction-trees.  However, for complete block-graphs, an example of which is shown in Fig. \ref{fig:ex_ug_full}, we see that both the junction-tree and the block-tree graph have the same computational complexity.  Thus, complete block-graphs are attractive graphical models to use the block-tree graph framework when the goal is to do exact inference.
In Section \ref{sec:approximate_estimation}, we illustrate the advantage of using the block-tree graph framework on arbitrary graphical models in the context of approximate inference in graphical models.

\section{Boundary Valued Graphs}
\label{sub:boundary_valued_graphs}

In this Section, we specialize our block-tree graph framework to boundary valued graphs, which are known as chain graphs in the literature.
We are only concerned with boundary valued graphs where we have one set of boundary nodes connected in a directed manner to nodes in an undirected graph.  The motivation for using these particular types of boundary valued graphs is in modeling physical phenomena whose underlying statistics are governed by partial differential equations satisfying boundary conditions \cite{LevyAdamsWillsky1990,MouraGoswami1997}.  A common example is in texture modeling, where the boundary edges are often assumed to satisfy either periodic, Neumann, or Dirichlet boundary conditions \cite{MouraBalram1992}.  If the boundary values are zero, the graph will become undirected, and we can then use the block-tree graph framework in Section \ref{sec:block_tree_graph}.

Let $G = \left(\{V,\partial V\},E_V^- \cup E_{\partial V}^- \cup E^{\rightarrow}\right)$ be a boundary valued graph, where $V^-$ is a set of nodes, called \emph{interior nodes}, and $\partial V$ is another set of nodes referred to as \emph{boundary nodes}.  We assume that the nodes in $V^-$ and $\partial V$ are connected by undirected edges, denoted by the edge sets $E_V^-$ and $E_{\partial V}^-$, and that there exist directed edges between the nodes of $V^-$ and $\partial V$.  To construct a block-tree graph, we first need to chose a cluster $C_1$.  As discussed in Section \ref{sec:block_tree_graph}, any choice of clusters can lead to a recursive representation; however, it is natural for boundary valued graphs to initiate at the boundary.  For this reason, we let the root cluster be $\partial V$ and then use Algorithm \ref{alg:blocktree} to construct a block-tree graph.  By choosing the boundary values first, we convert a boundary valued problem into an initial valued problem.  

An example of a boundary valued graph and its block-tree graph is shown in Fig. \ref{fig:chain_graph_example}.  We let $C_1$ be the boundary nodes $\{a,b,c,d\}$ and subsequently construct $C_2 = \{1,3,7,9\}$, $C_3 = \{2,4,6,8\}$, and $C_4 = \{5\}$ so that we get the chain structured graph in Fig. \ref{fig:chain_graph_example}(b).  The probability distribution of $\xb$ defined on this graph can be written as
{ \vspace{-0.5cm}\singlespace \begin{align}
p(\xb) &= P(a)P(b)P(c)P(d) \psi_{abcd}\psi_{1,a}\psi_{3,b}\psi_{7,c}\psi_{9,d}
\psi_{1:9} \,, \text{where,} \\
\psi_{1:9} &= \psi_{12} \psi_{14}\psi_{2,5}\psi_{2,3}
\psi_{3,6} \psi_{5,6} \psi_{4,5} \psi_{6,9} \psi_{8,9} \psi_{5,8}
\psi_{7,8} \psi_{4,7} \,, \nonumber \\
\psi_{abcd}^{-1} &= \sum_{1:9}\psi_{1,a}\psi_{3,b}\psi_{7,c}\psi_{9,d} \psi_{1:9} \label{eq:abcd}\,.
\end{align}}
Using the block-tree graph, we write the probability distribution as
{ \vspace{-0.5cm} \singlespace \begin{align}
p(\xb) &= \Psi_{1,2}(x_{V_1},x_{V_2}) \Psi_{2,3}(x_{V_2},x_{V_3})
\Psi_{3,4}(x_{V_3},x_{V_3}) \,, \text{where,} \label{eq:cgg} \\
\Psi_{1,2}(x_{V_1},x_{V_2}) &= P(a)P(b)P(c)P(d)\psi_{abcd}
\psi_{1,a}\psi_{3,b}\psi_{7,c}\psi_{9,d} \\
\Psi_{2,3}(x_{V_2},x_{V_3}) &= \psi_{12}\psi_{1,4}\psi_{3,2}\psi_{3,6}
\psi_{7,4}\psi_{7,8} \psi_{6,9} \psi_{8,9} \\
\Psi_{3,4}(x_{V_3},x_{V_3}) &= \psi_{2,5}\psi_{4,5}\psi_{6,5}\psi_{8,5}
\end{align}}
Notice that (\ref{eq:cgg}) did not require the calculation of a normalization constant $Z$.  This calculation is hidden in the potential function $\psi_{abcd}$ given by (\ref{eq:abcd}).  We note that the results presented here extend our results for deriving recursive representations for Gaussian lattice models with boundary conditions in \cite{VatsMoura2009j} to arbitrary (non-Gaussian) undirected graphs with boundary values.

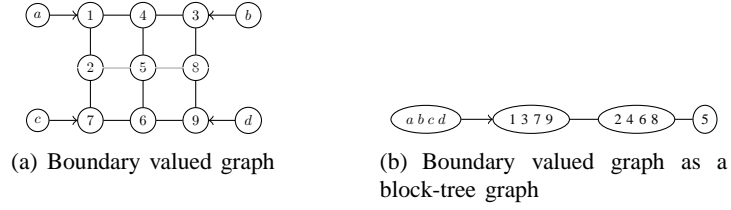
\begin{figure}
\begin{center}
\subfigure[Boundary valued graph]{
\begin{tikzpicture}[scale=0.7]
\tikzstyle{every node}=[draw,shape=circle,scale=0.5];
\path (-1,1) node (x1) {$1$};
\path (0,1) node (x2) {$4$};
\path (1,1) node (x3) {$3$};
\path (-1,0) node (x4) {$2$};
\path (0,0) node (x5) {$5$};
\path (1,0) node (x6) {$8$};
\path (-1,-1) node (x7) {$7$};
\path (0,-1) node (x8) {$6$};
\path (1,-1) node (x9) {$9$};
\path(-2,1) node (a) {$a$};
\path(2,1) node (b) {$b$};
\path(-2,-1) node (c) {$c$};
\path(2,-1) node (d) {$d$};
\draw[->] (a) -- (x1);
\draw[->] (b) -- (x3);
\draw[->] (c) -- (x7);
\draw[->] (d) -- (x9);
\draw (x1) -- (x2) -- (x3) -- (x6) -- (x9) -- (x8) -- (x7);
\draw (x7) -- (x4) -- (x1) (x4) -- (x5) -- (x6);
\draw (x2) -- (x5) -- (x8);
\draw[color=white] (-3,0) -- (3,0);
\end{tikzpicture}
} \qquad
\subfigure[Boundary valued graph as a block-tree graph]{
\begin{tikzpicture}[scale=0.7]
\tikzstyle{every node}=[draw,shape=circle,scale=0.5];
\path (-2,0) node[ellipse] (a) {$a \; b \; c \; d$};
\path (0,0) node[ellipse] (x1) {1 3 7 9};
\path (2,0) node[ellipse] (x2) {2 4 6 8};
\path (3.3,0) node[ellipse] (x3) {5};
\draw[->] (a) -- (x1);
\draw (x1) -- (x2) -- (x3);
\end{tikzpicture}
}
\end{center}
\caption{An example of a boundary valued graph}
\label{fig:chain_graph_example}
\vspace{-0.3cm}
\end{figure}

\section{Gaussian Graphical Models}
\label{sec:gaussian_gm}

In this Section, we specialize our block-tree graph framework to Gaussian graphical models.  Section \ref{sec:preliminary} reviews Gaussian graphical models and introduces relevant notations.  Section \ref{sec:state_space} derives linear state-space representations for undirected Gaussian graphical models.  Using these state-space representations, we derive recursive estimators in Section \ref{sec:recursive_estimation}.

\subsection{Preliminaries}
\label{sec:preliminary}

Let $\xb \in \R^n$ be a Gaussian random vector defined on a graph $G = (V,E)$, $V = \{1,2,\ldots,n\}$, and\footnote{For simplicity, we assume $x_k \in \R$, however, our results can be easily generalized when $x_k \in \R^d$, $d \ge 2$.} $x_k \in \R$ .  Without loss in generality, we assume that $\xb$ has zero mean and covariance $\Sigma$.
From \cite{Reu2005}, it is well known that the inverse of the covariance is sparse and the nonzero patterns in $J = \Sigma^{-1}$ determine the edges of the graph $G$.  In the literature, $J$ is often referred to as the information matrix or the potential matrix. 

\begin{figure}
\begin{center}
\subfigure[Scales]{
\scalebox{0.8} % Change this value to rescale the drawing.
{
\begin{pspicture}(0,-2.6)(6.121406,2.64)
\usefont{T1}{ptm}{m}{n}
\rput(2.9623437,0.55){$C_k$}
\psline[linewidth=0.04cm](3.0264063,0.92156255)(3.6864061,2.1215625)
\psline[linewidth=0.04cm](2.6664062,0.26156256)(1.6864063,-0.7568142)
\psline[linewidth=0.04cm](2.8064063,0.26156256)(2.5864062,-0.7784375)
\psline[linewidth=0.04cm,linestyle=dotted,dotsep=0.16cm](2.9464064,0.24077936)(3.1464062,-0.7223437)
\psline[linewidth=0.04cm,linestyle=dotted,dotsep=0.16cm](3.0664062,0.24077936)(3.6864061,-0.7223437)
\psline[linewidth=0.04cm](3.3064063,0.24077936)(4.206406,-0.7584375)
\usefont{T1}{ptm}{m}{n}
\rput(4.1,2.5){$C_{\Upsilon(k)}$}
\usefont{T1}{ptm}{m}{n}
\rput(1.5823437,-1.0484375){$C_{\Gamma_1(k)}$}
\usefont{T1}{ptm}{m}{n}
\rput(2.7009375,-1.0484375){$C_{\Gamma_2(k)}$}
\usefont{T1}{ptm}{m}{n}
\rput(4.5009375,-1.0484375){$C_{\Gamma_q(k)}$}
\psline[linewidth=0.04cm](1.2064062,-1.3184375)(0.22640625,-2.3368142)
\psline[linewidth=0.04cm](1.3464062,-1.3184375)(1.1264062,-2.3584375)
\psline[linewidth=0.04cm,linestyle=dotted,dotsep=0.16cm](1.4864062,-1.3392206)(1.6864063,-2.3023436)
\psline[linewidth=0.04cm,linestyle=dotted,dotsep=0.16cm](1.6064062,-1.3392206)(2.2264063,-2.3023436)
\psline[linewidth=0.04cm](1.8464062,-1.3392206)(2.7464063,-2.3384376)
\psline[linewidth=0.04cm,linestyle=dotted,dotsep=0.10583334cm](3.7,0.96)(3.9,2.14)
\psline[linewidth=0.04cm,linestyle=dotted,dotsep=0.10583334cm](4.28,0.9)(4.2,2.2)
\psline[linewidth=0.04cm](4.96,0.96)(4.52,2.12)
\psframe[linewidth=0.02,linestyle=dashed,dash=0.16cm 0.16cm,dimen=outer](5.2,0.88)(0.0,-2.6)
\usefont{T1}{ptm}{m}{n}
\rput(0.7,1.1){${\cal T}(C_k)$}
\end{pspicture} 
}}
\subfigure[Block-tree]{
\begin{tikzpicture}[scale=0.7]
\tikzstyle{every node}=[draw,scale=0.6];
\path (0,1) node[ellipse] (c1) {$7$};
\path (0,0) node[ellipse] (c2) {$4$ $8$};
\path (-1,-1) node[circle] (c3) {$1$};
\path (1,-1) node[ellipse] (c4) {$5$ $9$};
\path (1,-2) node[ellipse] (c5) {$2$ $6$};
\path (1,-3) node[ellipse] (c6) {$3$};
\draw (c1) -- (c2) -- (c4) -- (c5) -- (c6);
\draw (c2) -- (c3);
\end{tikzpicture}
\label{fig:ex_scale}
}
\end{center}
\caption{Block-tree graph and related shift operators.}
\label{fig:scale_tree}
\vspace{-0.3cm}
\end{figure}
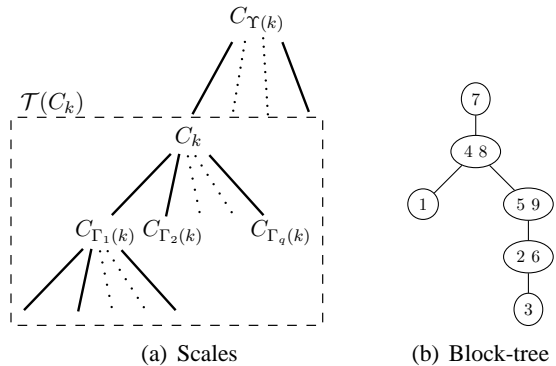

Suppose we construct a block-tree graph ${\cal G} = (\C,{\cal E})$ from the undirected graph $G = (V,E)$.  Let $P$ be a permutation matrix which maps $V$ to $\C = \{C_1,C_2,\ldots,C_{l}\}$, i.e.,
\[ x({\C}) = P x(V) = P \xb \,. \]
The covariance of $x({\C})$ is 
\begin{align}
E[x({\C})x({\C})^T] &= P \Sigma P^T = [\Sigma_P(i,j)]_l \label{eq:cov_x_c}\\
E[x({C_i}) x({C_j})^T] &= \Sigma_P(i,j) \,, \text{ for  } i,j = 1,\ldots,l \,, \label{def:sigma_p}
\end{align}
where the notation $\textbf{F} = [F(i,j)]_l$ refers to the blocks of the matrix $\textbf{F}$ for $i,j = 1,\ldots,l$.  We now define some notation that will be useful in deriving the state-space representation and the recursive estimators.  The notation is borrowed from standard notation used for tree-structured graphs in \cite{ChouWillsky1994a,ChouWillsky1994b,WainwrightThesis}.  For the block-tree graph, we have defined $C_1$ as the root cluster.  The other clusters of the block-tree graph can be partially ordered according to their \emph{scale}, which is the distance of a cluster to the root cluster $C_1$.  This distance is defined as the number of edges in the path connecting two clusters.  Since we have a tree graph over the clusters, a path connecting two clusters is unique.  Thus, the scale of $C_1$ is zero.  All the neighbors of $C_1$ will have scale one.  For any $C_k$ at scale $s$, define $\{C_{\Gamma_1(k)},\ldots,C_{\Gamma_q(k)}\}$ as the set of clusters connected to $C_k$ at scale $s+1$ and $C_{\Upsilon(k)}$ as the cluster connected to $C_k$ at scale $s-1$.  Let ${\cal T}(C_k)$ be all the clusters including $C_k$ at scale greater than $s$ and at a distance greater than zero:
\begin{equation}
{\cal T}(C_k) = \{C_i : d(C_k,C_i) \ge 0 \text{ and } 
\text{scale}(C_i) > s\} \label{eq:rooted_cluster} \,,
\end{equation}
where scale$(C_i)$ is the scale of the cluster $C_i$. Fig.~\ref{fig:scale_tree}(a) shows the relevant notations on a block-tree graph.  Fig.~\ref{fig:scale_tree}(b) shows an example of a block-tree graph where the root cluster is $C_1 = \{7\}$, the cluster at scale one is $C_2 = \{7,8\}$, the clusters at scale two are $C_3 = \{1\}$ and $C_4 = \{5,9\}$, the cluster at scale three is $C_5 = \{2,6\}$, and the cluster at scale four is $C_6 = \{3\}$.  For $C_2$, we have that $C_{\Gamma_1(2)} = C_3$, $C_{\Gamma_2(2)} = C_4$, and $C_{\Upsilon(2)} = C_1$.  In the next Section, we derive a state-space representation for Gaussian random vectors defined on undirected graphs using block-tree graphs.

\subsection{State-Space Representation}
\label{sec:state_space}

Given a block-tree graph, we now derive state-space representations on the tree structure.  There are two types of representations we can define, one in which we are given $x\left(C_k\right)$ at scale $s$, and we want to compute $x\left(C_{\Upsilon(k)}\right)$ at scale $s-1$, and another in which we are given 
$x\left(C_{\Upsilon(k)}\right)$ at scale $s-1$, and we want to compute $x\left(C_k\right)$ at scale $s$.

\begin{theorem}
\label{thm:state_space}
Let $\xb = \{x_k \in \R : k \in V\}$ be a random vector defined over a graph $G = (V,E)$ and let ${\cal G} = (\C,{\cal E})$ be a block-tree graph obtained using a root cluster $C_1$.  We have the following linear representations:
{\singlespace
\vspace{-0.7cm}
\begin{align}
x(C_k) &= A_k x(C_{\Upsilon(k)}) + u(C_k) \label{eq:downward} \\ 
x(C_{\Upsilon(k)}) &= F_k x(C_k) + w(C_k) \,, \label{eq:upward}
\end{align}
}
where $u(C_k)$ is a \emph{white} Gaussian noise uncorrelated with $x\left(C_{\Upsilon(k)}\right)$, $w(C_k)$ is \emph{non-white} Gaussian noise uncorrelated with $x(C_k)$, and
{\singlespace
\vspace{-0.5cm}
\begin{align}
A_k &= \Sigma_P (C_k,C_{\Upsilon(k)})
\left[\Sigma_P(C_{\Upsilon(k)},C_{\Upsilon(k)})\right]^{-1} \\
Q_k^u &= E\left[ u(C_k) u^T(C_k) \right] = \Sigma_P(C_k,C_k) - A_k\Sigma_P(C_{\Upsilon(k)},C_k) \label{eq:q_u} \\
F_k &=  \Sigma_P (C_{\Upsilon(k)},C_k)\left[\Sigma_P(C_k,C_k)\right]^{-1} \\
Q_k^w &= E\left[ w(C_k) w^T(C_k) \right] = 
\Sigma_P(C_{\Upsilon(k)},C_{\Upsilon(k)}) - F_k \Sigma_P(C_k,C_{\Upsilon(k)})
\,,
\end{align}
}
where $\Sigma_P(i,j)$ is a block from the covariance of $x(\C)$ defined in (\ref{def:sigma_p}).
\end{theorem}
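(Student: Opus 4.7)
The plan is to derive both state-space equations as instances of the standard linear minimum mean-square error (LMMSE) projection for jointly Gaussian vectors, and then invoke the Markov structure of the block-tree graph to obtain whiteness of $\{u(C_k)\}$ across clusters.

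Since $\xb$ is zero-mean Gaussian, the pair $(x(C_k), x(C_{\Upsilon(k)}))$ is jointly Gaussian with blocks read off (\ref{def:sigma_p}). Define $A_k = \Sigma_P(C_k, C_{\Upsilon(k)})[\Sigma_P(C_{\Upsilon(k)},C_{\Upsilon(k)})]^{-1}$ and set $u(C_k) := x(C_k) - A_k x(C_{\Upsilon(k)})$. Standard LMMSE theory (equivalently, the orthogonality principle) gives $E[u(C_k) x(C_{\Upsilon(k)})^T] = 0$, so $u(C_k)$ is uncorrelated with, and being jointly Gaussian with, independent of $x(C_{\Upsilon(k)})$. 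Expanding $E[u(C_k) u(C_k)^T]$ and using the orthogonality collapses two of the four cross-terms and yields
$$Q_k^u = \Sigma_P(C_k,C_k) - A_k \Sigma_P(C_{\Upsilon(k)}, C_k).$$
The upward equation is handled symmetrically: swap the roles of $C_k$ and $C_{\Upsilon(k)}$, define $F_k$ by the LMMSE formula, set $w(C_k) := x(C_{\Upsilon(k)}) - F_k x(C_k)$, and obtain $Q_k^w$ by the same direct expansion. Both residuals are Gaussian and zero-mean because they are linear combinations of the components of $\xb$.

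To establish whiteness of $\{u(C_k)\}$ across clusters, I invoke the global Markov property on the block-tree graph: that $C_{\Upsilon(k)}$ separates ${\cal T}(C_k)$ from the remaining clusters. Applied to the Gaussian field $\xb$, this gives
$$x(C_k) \;\perp\; \{x(C_j) : C_j \notin {\cal T}(C_k)\} \;\big|\; x(C_{\Upsilon(k)}).$$
Because conditional expectations of Gaussian vectors are linear, $u(C_k) = x(C_k) - E[x(C_k) \mid x(C_{\Upsilon(k)})]$ lies in the linear span of $x(C_k) \cup x(C_{\Upsilon(k)})$ and is orthogonal to $x(C_{\Upsilon(k)})$; combined with the conditional independence above, $u(C_k)$ is uncorrelated with every $x(C_j)$ for $C_j \notin {\cal T}(C_k)$, hence with $u(C_j)$ as well. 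Any two distinct clusters $C_k, C_j$ satisfy $C_j \notin {\cal T}(C_k)$ or $C_k \notin {\cal T}(C_j)$, so $E[u(C_k) u(C_j)^T] = 0$. By contrast, two siblings $C_k, C_j$ with $C_{\Upsilon(k)} = C_{\Upsilon(j)}$ produce residuals $w(C_k), w(C_j)$ that both carry the same $x(C_{\Upsilon(k)})$ term; a short calculation shows their cross-covariance is generally nonzero, so $\{w(C_k)\}$ is non-white.

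The one nontrivial step is the separation claim underlying the Markov property on the block-tree graph, namely that $C_{\Upsilon(k)}$ separates ${\cal T}(C_k)$ from its complement in the original graph $G$. This follows from the layered construction in Algorithm~\ref{alg:blocktree}: any edge of $E$ between a node in ${\cal T}(C_k)$ and a node outside $C_{\Upsilon(k)} \cup {\cal T}(C_k)$ would force the outside endpoint into an adjacent scale, contradicting the rule $V_r = {\cal N}(V_{r-1}) \setminus (V_{r-2} \cup V_{r-1})$ and the grouping done by the backwards pass. Once that topological fact is in place, the remainder is routine LMMSE algebra combined with the fact that zero-mean jointly Gaussian random vectors are independent if and only if they are uncorrelated.
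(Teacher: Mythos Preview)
Your proposal is correct and follows essentially the same route as the paper: both arguments identify $A_k x(C_{\Upsilon(k)})$ as the Gaussian conditional mean $E[x(C_k)\mid x(C_{\Upsilon(k)})]$, use the global Markov property on the block-tree (separation by $C_{\Upsilon(k)}$) to upgrade this to $E[x(C_k)\mid \{x(C_j):C_j\notin{\cal T}(C_k)\}]$, and then read off whiteness of $\{u(C_k)\}$ from orthogonality of the mmse residual. The paper simply reverses your order of presentation---it invokes the Markov reduction first and then applies the Gauss--Markov formula---and is terser about why $\{u(C_k)\}$ is white and $\{w(C_k)\}$ is not; your explicit treatment of the sibling case and of why Algorithm~\ref{alg:blocktree} guarantees the required separation in $G$ fills in details the paper leaves implicit. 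One small point worth tightening: when you conclude ``hence with $u(C_j)$ as well,'' you are using that $u(C_j)$ also depends on $x(C_{\Upsilon(j)})$, so you should note that $C_j\notin{\cal T}(C_k)$ forces $C_{\Upsilon(j)}\notin{\cal T}(C_k)$ as well (otherwise $C_j$ would be a descendant of $C_k$).
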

\begin{proof}
We first derive (\ref{eq:downward}).  Consider the cluster of nodes $\C \backslash {\cal T}(C_k)$.  From the global Markov property, stated in Definition \ref{def:global_markov_property}, we have
\begin{align}
\widehat{x}(C_k) &= E\left[x(C_k) | \{x(C_i) : C_i \in \C \backslash {\cal T}(C_k)\}\right]
= E\left[ x(C_k) | x(C_{\Upsilon(k)})\right] \,. \\
&= \Sigma_P (C_k,C_{\Upsilon(k)})
\left[\Sigma_P(C_{\Upsilon(k)},C_{\Upsilon(k)})\right]^{-1} x(C_{\Upsilon(k)}) = A_k x(C_{\Upsilon(k)}) \label{eq:t_1} \,,
\end{align}
where we get (\ref{eq:t_1}) using the Gauss-Markov theorem \cite{Scharf} for computing the conditional mean.  Define the error $u(C_k)$ as
\begin{equation}
u(C_k) = x(C_k) - \widehat{x}(C_k) = x(C_k) - A_k x(C_{\Upsilon(k)}) \,.
\end{equation}
It is clear that $u(C_k)$ is Gaussian.  Further, by the orthogonality properties of the minimum-mean squared error (mmse) estimates, $u(C_k)$ is white.  The variance of $u(C_k)$ is computed as follows:
{\singlespace
\vspace{-0.5cm}
\begin{align}
E\left[ u(C_k) u^T(C_k) \right] &= 
E\left[(x(C_k) -\widehat{x}(C_k))(x(C_k) - \widehat{x}(C_k))^T\right] \label{eq:t_2}\\
&=E\left[(x(C_k) -\widehat{x}(C_k))x^T(C_k)\right] \label{eq:t_3}\\ 
&=\Sigma_P(C_k,C_k) - A_k\Sigma_P(C_{\Upsilon(k)},C_k) \,.
\end{align}
}
To go from (\ref{eq:t_2}) to (\ref{eq:t_3}), we use the orthogonality of $u(C_k)$.  This gives us the $Q_k^u$ in (\ref{eq:q_u}).  Equation (\ref{eq:upward}) can be either derived in a similar manner or alternatively by using the results of \cite{VargheseKailath1979} on backwards Markovian models.
\end{proof}

The driving noise in (\ref{eq:upward}), $w(C_k)$, is not white noise.  This happens because for each $C_{\Upsilon(k)}$, there can be more than one cluster such that $C_{\Upsilon(j)} = C_{\Upsilon(k)}$.  Using the state-space representations, we can easily recover standard recursive algorithms, an example of which is shown in the next Section where we consider the problem of estimation over Gaussian graphical models.

\subsection{Recursive Estimation}
\label{sec:recursive_estimation}

Let $\xb \in \R^n$ be a zero mean Gaussian random vector defined on an undirected graph $G = (V,E)$.  Let $\Sigma$ be the covariance of $\xb$ and let $J = \Sigma^{-1}$.  Suppose we collect noisy observations of $\xb$ such that
\begin{equation}
y_s = H_s x_s + n_s \,, \label{eq:observations}
\end{equation}
where $n_s \sim {\cal N}(0,R_s)$ is white Gaussian noise independent of $x_s$ and $H_s$ is known.  Given $\yb = [y_1,\ldots,y_n]^T$, we want to find the minimum-mean squared error (mmse) estimate of $\xb$, which is $E[\xb | \yb]$.  From the Gauss-Markov theorem \cite{Scharf}, we have
\begin{align}
\widehat{x} &= E[\xb | \yb] = E[\xb \yb^T] 
\left(E[\yb \yb^T]\right)^{-1} \yb \\
&= \Sigma \left(H \Sigma H^T + R\right)^{-1} \yb \label{eq:mmse_estimate}\,,
\end{align}
where $H$ and $R$ are diagonal matrices with diagonals $H_s$ and $R_s$, respectively.  Using (\ref{eq:mmse_estimate}) to compute $\widehat{\xb}$ requires inversion of a $n \times n$ matrix, which has complexity $O(n^3)$.  An alternate method is to use the state-space representations in Theorem \ref{thm:state_space} and derive standard Kalman filters and recursive smoothers using \cite{ChouWillsky1994a,ChouWillsky1994b} \footnote{The results in \cite{ChouWillsky1994a} are on dyadic trees, however they can be easily generalized to arbitrary trees.}.  This approach will require inversion of a $\text{btw}(G) \times \text{btw}(G)$ matrix, where $\text{btw}(G)$ is the treewidth of the graph.  Another approach to computing $\widehat{\xb}$ is to use the equations in \cite{SudderthThesis2002}, where they derive estimation equations for Gaussian tree distributions given $J = \Sigma^{-1}$.  The generalization to block-tree graphs is trivial and will only involve identifying appropriate blocks from the inverse of the covariance matrix.  Thus, by converting an arbitrary graph into a block-tree graph, we are able to recover algorithms for recursive estimation of Gaussian graphical models.
For graphs with high block-treewidth, however, computing mmse estimates is computationally intractable.  For this reason, we propose an efficient approximate estimation algorithm in the next Section.

\section{Approximate Estimation}
\label{sec:approximate_estimation}

In this Section, we use the block-tree graph framework to derive approximate estimation algorithms for Gaussian graphical models.  The need for approximate estimation arises because estimation/inference in graphical models is computationally intractable for graphical models with large treewidth or large block-treewidth.  The approach we use for approximate estimation is based on decomposing the original graphical model into computationally tractable subgraphs and using the subgraphs for estimation, see \cite{WainwrightMAP2002,WainwrightTRP2003,SudderthWainwrightWillsky2004} for a general class of algorithms.
Traditional approaches to finding subgraphs involve using spanning trees, which are tree-structured subgraphs.  We propose to use 
\emph{spanning block-trees}, which are block-tree graphs with low block-treewidth.  Section \ref{sec:spanning_bt} outlines a heuristic algorithm for finding maximum weight spanning block-trees.  
We review the matrix splitting approach to approximate estimation in Section \ref{sub:estimation_matrix_splitting} and show how spanning block-trees can be used instead of spanning trees.  Section \ref{sub:experimental_results} shows experimental results.

\subsection{Spanning Block-Trees}
\label{sec:spanning_bt}

Let $G = (V,E)$ be an undirected graph.  We define a $B$-width spanning block-tree as a subgraph of $G$ with block-treewidth at most $B$.  If $B=1$, the spanning block-tree becomes a spanning tree.  For $B > 1$, we want to remove edges from the graph $G$ until we get a block-tree graph with block-width less than or equal to $B$.  To quantify each edge, we associate a weight $w_{i,j}$ for each $(i,j) \in E$.  If $w_{i,j}$ is the mutual information between nodes $i$ and $j$, finding an optimal spanning block-tree by removing edges reduces to minimizing the Bethe free energy \cite{Yedidia2000,Chechetka2009}.  For the purpose of approximate estimation in Gaussian graphs, the authors in \cite{VenkatJohnsonWillsky2008} proposed weights that provided a measure of error-reduction capacity of each edge in the graph.  We use these weights when finding spanning block-trees for the purpose of approximate estimation in Section \ref{sub:estimation_matrix_splitting}.  If a graph is not weighted, we can assign all the weights to be one.  In this case, finding a maximum weight $B$-width spanning block-tree is equivalent to finding a $B$-width spanning block-tree which retains the most number of edges in the final subgraph.

{ \singlespace
\begin{algorithm}
\caption{Constructing Maximum Weight Spanning Block-Trees}\label{alg:spanningblocktree}
\begin{algorithmic}[1]
\Procedure{MWSpanningBlockTree}{$G,W,B$}
\State ${\cal G}$ = FindOptimalBlockTree(G) ; ${\cal G} = ({\cal C},{\cal E})$ \label{line:optimal_bt}
%\State \textbf{If} (bw$({\cal G}) \le B$) return ${\cal G}$; \textbf{end if} \label{line:return_B}
\State ${\cal C}^B \leftarrow$ SplitCluster(${\cal G},W$); See Section \ref{sub:splitting_clusters} \label{line:splitting}
\State ${\cal G}^B \leftarrow $ MWST$\left(\{C_{k_i}^{l_k}\},W\right)$; 
See Section \ref{sub:find_bt_from_clusters} \label{line_connect}
\EndProcedure
\end{algorithmic}
\end{algorithm}
}

Algorithm \ref{alg:spanningblocktree} outlines our approach to finding maximum weight spanning block-trees.  The input to the algorithm is an undirected graph $G$, a weight matrix $W = [w_{i,j}]_n$, and the desired block-treewidth $B$.  The output of the algorithm is a block-tree graph 
${\cal G}^B = ({\cal C}^B,{\cal E}^B)$, where $\text{btw}({\cal G}) \le B$.
Algorithm \ref{alg:spanningblocktree} is a greedy algorithm for finding spanning block-trees since solving this problem optimally is combinatorially complex.  We first find the optimal block-tree graph ${\cal G}$ for the undirected graph $G$ using the algorithm outlined in Section~\ref{sec:optimal_block_tree} (Line \ref{line:optimal_bt}).  The next steps in the algorithm are: (i) Splitting ${\cal C}$ into clusters ${\cal C}^B$ (Line \ref{line:splitting}), and (ii) finding edges ${\cal E}^B$ connecting two clusters so that ${\cal G}^B = ({\cal C}^B,{\cal E}^B)$ is a block-tree graph (Line \ref{line_connect}).

\bigskip
\subsubsection{\textbf{Splitting Clusters}}
\label{sub:splitting_clusters}
Since the maximum size of the cluster in ${\cal C}$ is greater than $B$, we first identify all clusters $\{C_{k_1},\ldots,C_{k_m}\}$ such that 
$|C_{k_i}| > B$ for all $i = 1,\ldots,m$.  Next, we split each $C_{k_i}$ into smaller clusters so that $C_{k_i} = \{C_{k_i}^1,C_{k_i}^2,\ldots,C_{k_i}^{h}\}$, where $|C_{k_i}^{j}| \le B$ for $j = 1,\ldots,h$.  This splitting must be done in such a way the the nodes in the same cluster retain edges in the original graph with maximum weight.  The algorithm we propose for splitting the clusters is as follows:
\begin{enumerate}[a)]
\item For each $C_{k_i}$, let 
$\Gamma(C_{k_i}) = \{C_{\Gamma_1(k_i)} , C_{\Gamma_2(k_i)}, \ldots, C_{\Gamma_q(k_i)}\}$ be all the clusters connected to $C_{k_i}$ at the next scale and let $C_{\Upsilon({k_i})}$ be the cluster connected to $C_{k_i}$ at the previous scale.  We assume that we have already split $C_{\Upsilon({k_i})}$ such that $C_{\Upsilon({k_i})} = \{C_{\Upsilon({k_i})}^1,\ldots,C_{\Upsilon({k_i})}^{w}\}$, where $|C_{\Upsilon({k_i})}^j| \le B$, for $j = 1,\ldots,w$.  The notations introduced are shown in Fig. \ref{fig:splitting_notation}(a).

\begin{figure}
\begin{center}
% Generated with LaTeXDraw 2.0.6
% Mon Jun 21 01:39:43 EDT 2010
% \usepackage[usenames,dvipsnames]{pstricks}
% \usepackage{epsfig}
% \usepackage{pst-grad} % For gradients
% \usepackage{pst-plot} % For axes
\scalebox{0.8} % Change this value to rescale the drawing.
{
\begin{pspicture}(0,-2.5684373)(15.120637,2.5484374)
\definecolor{color821b}{rgb}{0.6,0.6,1.0}
\psellipse[linewidth=0.04,dimen=outer,fillstyle=solid,fillcolor=color821b](14.790637,-1.4715625)(0.33,0.28)
\psellipse[linewidth=0.04,dimen=outer,fillstyle=solid,fillcolor=color821b](13.160146,-1.4615625)(1.1995087,0.37000006)
\psellipse[linewidth=0.04,dimen=outer,fillstyle=solid,fillcolor=color821b](7.0201454,-1.5015625)(1.5595087,0.41000006)
\psellipse[linewidth=0.04,dimen=outer,fillstyle=solid,fillcolor=color821b](7.0001454,-0.0615625)(1.5595087,0.41000006)
\psdots[dotsize=0.2](6.220637,-1.4915625)
\psdots[dotsize=0.2](6.600637,-1.4915625)
\psdots[dotsize=0.2](7.020637,-1.4915625)
\psdots[dotsize=0.2](7.420637,-1.4915625)
\psdots[dotsize=0.2](7.820637,-1.4915625)
\psline[linewidth=0.04cm](5.820637,-0.11156245)(6.200637,-1.4115624)
\psline[linewidth=0.04cm](6.580637,-0.11156245)(6.240637,-1.4315624)
\psline[linewidth=0.04cm](7.400637,-0.07156245)(6.600637,-1.4515624)
\psline[linewidth=0.04cm](7.420637,-0.11156245)(7.020637,-1.4315624)
\psline[linewidth=0.04cm](8.240638,-0.07156245)(7.820637,-1.4315624)
\psline[linewidth=0.04cm](6.620637,-0.09156245)(6.600637,-1.4315624)
\psline[linewidth=0.04cm](7.440637,-0.09156245)(7.420637,-1.4315624)
\psline[linewidth=0.04cm](8.240638,-0.03156245)(7.440637,-1.4515624)
\psline[linewidth=0.04cm](5.860637,-0.09156245)(6.580637,-1.4715625)
\psline[linewidth=0.04cm](6.620637,-0.09156245)(7.000637,-1.4515624)
\psline[linewidth=0.04cm](6.640637,-0.07156245)(7.8006372,-1.4315624)
\psline[linewidth=0.04cm](6.660637,-0.01156245)(7.420637,-0.03156245)
\rput{89.45386}(0.543172,-0.93248415){\psellipse[linewidth=0.04,dimen=outer](0.7422936,-0.1920549)(1.5794517,0.50201976)}
\rput{89.45386}(2.02185,-2.734203){\psellipse[linewidth=0.04,dimen=outer](2.3911202,-0.34649423)(1.2949638,0.4291644)}
\rput{0.13007866}(0.0015653649,-0.008992194){\psellipse[linewidth=0.04,dimen=outer](3.961575,0.68500024)(0.52,0.26)}
\rput{89.45386}(-0.57989985,-2.0278502){\psellipse[linewidth=0.04,dimen=outer](0.7336862,-1.3066521)(0.32,0.16)}
\rput{89.45386}(0.16701916,-1.3019902){\psellipse[linewidth=0.04,dimen=outer](0.74073964,-0.5666857)(0.32,0.16)}
\psdots[dotsize=0.08,dotangle=89.45386](0.7462681,0.013287968)
\psdots[dotsize=0.08,dotangle=89.45386](0.7481744,0.21327889)
\psdots[dotsize=0.08,dotangle=89.45386](0.7500808,0.4132698)
\psdots[dotsize=0.08,dotangle=89.45386](3.95109,-0.17494975)
\psdots[dotsize=0.08,dotangle=89.45386](3.9529965,0.025041156)
\psdots[dotsize=0.08,dotangle=89.45386](3.954903,0.22503208)
\psdots[dotsize=0.12,dotangle=89.45386](2.3366637,-1.0019176)
\psdots[dotsize=0.12,dotangle=89.45386](2.3404763,-0.6019356)
\psdots[dotsize=0.12,dotangle=89.45386](2.3440983,-0.22195292)
\psdots[dotsize=0.12,dotangle=89.45386](2.3479111,0.17802893)
\psdots[dotsize=0.12,dotangle=89.45386](2.3519144,0.5980098)
\psline[linewidth=0.04cm](0.89367884,-1.3081771)(2.2764757,-1.0213447)
\psline[linewidth=0.04cm](2.3119164,0.5983912)(0.8813052,-0.50802284)
\psline[linewidth=0.04cm](2.3002875,-0.6215534)(0.9003511,-0.60820895)
\usefont{T1}{ptm}{m}{n}
\rput{-1.0}(0.031648107,0.02580167){\rput(1.4750508,-1.7806687){$C_{\Upsilon(k_i)}$}}
\usefont{T1}{ptm}{m}{n}
\rput{-0.09644491}(0.0030078546,0.0050609815){\rput(2.9890575,-1.7646812){$C_{k_i}$}}
\psline[linewidth=0.04cm](0.87501425,-1.1679928)(2.3039095,-0.24157071)
\psline[linewidth=0.04cm](0.9211127,-0.5284032)(2.3077223,0.15841112)
\usefont{T1}{ptm}{m}{n}
\rput(2.1909494,-2.2615623){(a)}
\psdots[dotsize=0.2,linecolor=red](5.840637,-0.01156255)
\psdots[dotsize=0.2,linecolor=red](6.6406364,-0.01156255)
\psdots[dotsize=0.2,linecolor=red](7.440637,-0.01156255)
\psdots[dotsize=0.2,linecolor=red](8.260637,-0.01156245)
\psdots[dotsize=0.2](9.020637,-0.5515624)
\psdots[dotsize=0.2](9.820637,-0.5515624)
\psdots[dotsize=0.2](10.620637,-0.5515624)
\psdots[dotsize=0.2](11.440637,-0.5515624)
\psline[linewidth=0.04cm](11.380637,2.5084374)(11.380637,2.5284374)
\psline[linewidth=0.04cm](9.040637,-0.53156245)(11.460638,-0.53156245)
\usefont{T1}{ptm}{m}{n}
\rput(9.467824,-0.34156245){2}
\usefont{T1}{ptm}{m}{n}
\rput(10.185949,-0.34156245){3}
\psbezier[linewidth=0.04](9.020637,-0.51156247)(9.020637,-1.3115624)(10.620637,-1.3515625)(10.620637,-0.5515624)
\usefont{T1}{ptm}{m}{n}
\rput(9.644386,-1.3415625){1}
\usefont{T1}{ptm}{m}{n}
\rput(10.984386,-0.34156245){1}
\psbezier[linewidth=0.04](9.820637,-0.51156247)(9.820637,-1.3115624)(11.420638,-1.3115624)(11.420638,-0.51156247)
\usefont{T1}{ptm}{m}{n}
\rput(10.724387,-1.3415625){1}
\rput{0.13007866}(-0.0014768356,-0.008995648){\psellipse[linewidth=0.04,dimen=outer](3.961575,-0.6549997)(0.52,0.26)}
\rput{0.13007866}(-0.0032930234,-0.009043116){\psellipse[linewidth=0.04,dimen=outer](3.9815753,-1.4549998)(0.52,0.26)}
\usefont{T1}{ptm}{m}{n}
\rput(2.5420432,-1.0015626){$r$}
\rput{89.45386}(1.6269528,0.14409345){\psellipse[linewidth=0.04,dimen=outer](0.74073964,0.8933143)(0.32,0.16)}
\usefont{T1}{ptm}{m}{n}
\rput(2.5657933,-0.60156244){$r'$}
\usefont{T1}{ptm}{m}{n}
\rput(2.5170434,-0.20156245){$s$}
\psline[linewidth=0.04cm](0.90063703,0.92843753)(2.340637,0.6284376)
\psline[linewidth=0.04cm](2.720637,-1.0315624)(3.480637,-1.4115624)
\psline[linewidth=0.04cm](2.780637,-0.53156245)(3.460637,-0.61156243)
\psline[linewidth=0.04cm](2.700637,0.5084376)(3.460637,0.66843754)
\psellipse[linewidth=0.04,dimen=outer,fillstyle=solid,fillcolor=color821b](12.370638,-0.39156246)(0.33,0.28)
\psellipse[linewidth=0.04,dimen=outer,fillstyle=solid,fillcolor=color821b](14.790637,-0.41156244)(0.33,0.28)
\psellipse[linewidth=0.04,dimen=outer,fillstyle=solid,fillcolor=color821b](13.590636,-0.40156245)(0.73,0.37)
\usefont{T1}{ptm}{m}{n}
\rput(7.0209494,-2.3015625){(b)}
\psdots[dotsize=0.2,linecolor=red](11.440638,-0.5515624)
\psdots[dotsize=0.2,linecolor=red](10.620637,-0.55156255)
\psdots[dotsize=0.2,linecolor=red](9.820637,-0.55156255)
\psdots[dotsize=0.2,linecolor=red](9.0206375,-0.55156255)
\psdots[dotsize=0.2,linecolor=red](14.800637,-0.41156244)
\psdots[dotsize=0.2,linecolor=red](13.980638,-0.41156256)
\psdots[dotsize=0.2,linecolor=red](13.180636,-0.41156256)
\psline[linewidth=0.04cm](13.259654,-0.41156256)(13.880637,-0.41156244)
\psdots[dotsize=0.2,linecolor=red](12.380637,-0.41156256)
\psdots[dotsize=0.2](12.380636,-1.4715625)
\psdots[dotsize=0.2](13.180636,-1.4715625)
\psdots[dotsize=0.2](13.980637,-1.4715625)
\psdots[dotsize=0.2](14.800636,-1.4715625)
\psdots[dotsize=0.2,linecolor=red](14.800637,-1.4715625)
\psdots[dotsize=0.2,linecolor=red](13.980638,-1.4715625)
\psdots[dotsize=0.2,linecolor=red](13.180636,-1.4715625)
\psdots[dotsize=0.2,linecolor=red](12.380637,-1.4715625)
\psline[linewidth=0.04cm](13.259654,-1.4715625)(13.880637,-1.4715625)
\usefont{T1}{ptm}{m}{n}
\rput(10.17095,-2.3215625){(c)}
\usefont{T1}{ptm}{m}{n}
\rput(13.520949,-2.3415625){(d)}
\end{pspicture} 
}
\caption{(a) Notation used in Section \ref{sub:splitting_clusters}.
(b) An example showing how clusters are split into smaller clusters.  
The goal is to split the cluster with red nodes into clusters with maximum size $2$ and $3$.  All edges are assumed to have weight one. 
(c) Weighted graph constructed using $\eta_{rs}$ given in (\ref{eq:weights_split}). 
(d) Smaller clusters formed for $B = 2$ and $B = 3$.  For $B = 2$, we first choose the middle two nodes since the weight between these two nodes in (c) is maximum.  The remaining nodes are unconnected in (c), so we assign them to individual clusters.  For $B = 3$, we again choose the middle two clusters first and then add another cluster so that the sum of weights is maximal. }
\label{fig:splitting_notation}
\end{center}
\end{figure}
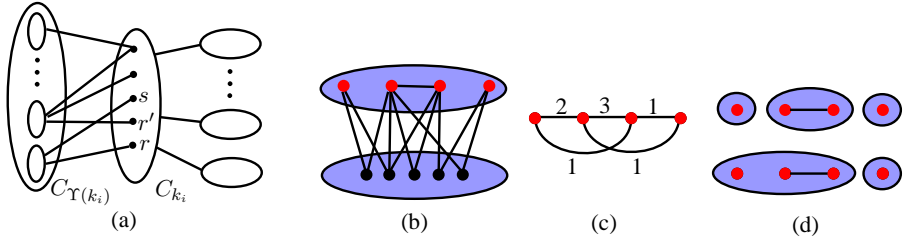

\item To split the cluster $C_{k_i}$, we first identify nodes in $C_{k_i}$ which can be clustered together.  For any two distinct nodes $r,s \in C_{k_i}$, if both $r$ and $s$ have edges in any one of the clusters 
$C_{\Upsilon({k_i})}^j$ for any $j = 1,\ldots,h$, then nodes $r$ and $s$ can be clustered together.  For example, in Fig. \ref{fig:splitting_notation}(a), nodes $r$ and $r'$ can not be clustered together, whereas nodes $r$ and $s$ can be clustered together.

\item We now associate weights $\eta_{rs}$ between nodes of $C_{k_i}$ that can be clustered together:
\begin{equation}
\eta_{rs} = 
w_{rs} + \sum_{t \in {\cal N}(r) \cap {\cal N}(s) \cap \Gamma(C_{k_i})}
\left( w_{rt} + w_{ts} \right) \label{eq:weights_split} \,.
\end{equation}
The intuition behind constructing the weights in (\ref{eq:weights_split}) is to cluster nodes together that are connected to the same cluster at the next scale or are connected to each other.  An example of constructing $\eta_{rs}$ is shown in Fig \ref{fig:splitting_notation}(b) and Fig \ref{fig:splitting_notation}(c), where we assume the weights on each edge are one.  

\item Using $\eta_{rs}$ we construct a weighted graph on the nodes in $C_{k_i}$.  To construct smaller clusters, we first choose two nodes for which $\eta_{rs}$ is maximum.  We keep adding nodes to this cluster by choosing nodes connected to at least one node in this cluster until the cluster size is $B$.  If no other node is connected to the new cluster, we start building another cluster.  For example, given the weighted graph in Fig.~\ref{fig:splitting_notation}(c), we construct clusters in 
Fig.~\ref{fig:splitting_notation}(d) with $B = 2$ and $B = 3$.
\end{enumerate}
After applying Steps (a)-(d) on all $C_{k_i}$ such that $|C_{k_i}| > B$, we get the news clusters ${\cal C}^B$.

\input{spanning_block.tex}

\bigskip
\subsubsection{\textbf{Find block-tree graph from clusters}}
\label{sub:find_bt_from_clusters}
Given the clusters ${\cal C}^B = \{C_k^B\}$, we can find a block-graph and associate weights between the clusters in ${\cal C}^B$ such that
\begin{equation}
w^B_{i,j} = \sum_{(i,j) \in (C_i^B \times C_j^B) \cap E} w_{i,j} \,. \label{eq:w_c}
\end{equation}`
Equation (\ref{eq:w_c}) corresponds to the sum of all weights connecting two clusters $C_i^B$ and $C_j^B$.  Given the weights in (\ref{eq:w_c}), we can easily find a spanning block-tree using the maximum weight spanning tree (MWST) algorithms of Prim \cite{Prim1957} or Kruskal \cite{Kruskal1956}.  Fig. \ref{fig:spanning_blocktree} shows an example of using Algorithm 2 to find a maximum weight $2$-width spanning block-tree. Fig. \ref{fig:spanning_grid} shows a collection of spanning block-trees for a $4 \times 4$ grid graph.

\bigskip
\subsubsection{\textbf{Complexity}}
The complexity of Algorithm 2 depends on the structure of the graph.  
Assuming the optimal block-tree graph is given, the complexity of splitting the clusters is $O(|C_{k_i}|^2)$ for each $C_{k_i}$ such that $|C_{k_i}| > B$.  This number will be dominated by the cluster with maximum size, thus the complexity of splitting clusters is $O((\text{btw}(G))^2)$.  The complexity of finding the final spanning block-tree depends on the graph and the number of edges in the block-graph formed using the clusters ${\cal C}^B$.  In general, the complexity of this step decreases as $B$ increases since this results in less number of edges and less number of smaller clusters ${\cal C}^B$.
In practice, finding the optimal block-tree graph is hard, so we use the heuristic algorithm outlined in Section \ref{sec:optimal_block_tree}.

\subsection{Estimation Via Matrix Splitting}
\label{sub:estimation_matrix_splitting}
This Section reviews the matrix splitting approach to approximate estimation in Gaussian graphical models.  For more details, see  \cite{SudderthWainwrightWillsky2004} and \cite{VenkatJohnsonWillsky2008}.
Let $\xb \in \R^n$ be a Gaussian graphical model defined on a graph $G = (V,E)$ with covariance $\Sigma$ and observations given by (\ref{eq:observations}).  The mmse estimate is given in (\ref{eq:mmse_estimate}).  An alternate characterization of (\ref{eq:mmse_estimate}) is in the information form \cite{SudderthWainwrightWillsky2004}:
\begin{align}
V \widehat{\xb} &= H^T R^{-1} \yb \\ \label{eq:mmse_in}
V &= (J + H^T R^{-1} H) \,,
\end{align}
where $J = \Sigma^{-1}$ and $\widehat{P} = V^{-1}$ is the error covariance matrix.  The matrices $H$ and $R$ are assumed to be diagonal, so the sparsity of $V$ is the same as the sparsity of $J$.  A family of approximate estimation algorithms, which are iterative algorithms, have been proposed in \cite{SudderthWainwrightWillsky2004}, with extensions in \cite{DelouilleNeelamanuBaranuik2006, VenkatJohnsonWillsky2008}.
The idea is to split the matrix $V$ at each iteration $k$ as $V = V_{S_k} - K_{S_k}$, where $S_k$ is a subgraph of $G$.  The sparsity of $V_{S_k}$ corresponds to the sparsity of the subgraph $S_k$ and the diagonals of $V_{S_k}$ are the same as the diagonals $V$.  Using matrix splitting, an iterative algorithm for estimation is given as \cite{SudderthWainwrightWillsky2004}:
\begin{equation}
V_{S_k} \widehat{\xb}^{(k)} = K_{S_k} \widehat{\xb}^{(k-1)} + H^T R^{-1} \yb \,,
\label{eq:et}
\end{equation}
where $\widehat{\xb}^{(k)}$ is estimate at step $k$.  If $S_k$ is a subgraph with low treewidth or low block-treewidth, computing (\ref{eq:et}) is computationally tractable.  Conditions for convergence of (\ref{eq:et}) are not known for general graphical models, however for walk-summable graphical models \cite{MalioutovJohnsonWillsky2006}, convergence is guaranteed \cite{VenkatJohnsonWillsky2008}.  To compute the error covariance $\widehat{P}$, we can use the same matrix splitting approach to solve the linear system $V \widehat{P} = I_n$, where $I_n$ is an $n \times n$ identity matrix \cite{SudderthWainwrightWillsky2004}.

It is clear that the choice of $S_k$ in (\ref{eq:et}) plays an important role in the convergence of the algorithm.
The problem of adaptively choosing $S_k$ at each iteration was considered in \cite{VenkatJohnsonWillsky2008,VenkatJohnsonWillsky2008c}, where the authors proposed a weight matrix $w_{u,v}$, for $(u,v) \in E$, which signified the error reduction capacity of an edge $(u,v)$ in the iterations (\ref{eq:et}):
\begin{align}
w_{u,v}^{(k)} &= \left( |h_u^{(k-1)}| + |h_v^{(k-1)}| \right) \frac{|J(u,v)|}{1 - |J(u,v)|} \,, \label{eq:weights_estimation} \\ 
h_u^{(k-1)} &= H^T R^{-1} \yb - V \widehat{\xb}^{(k-1)} \label{eq:error_estimation}\,.
\end{align}
Thus, at each iteration we want to choose a subgraph $S_k$ such that the sum of all the weights in the graph is maximized while $S_k$ is still a tractable subgraph.  A popular approach is to use spanning trees, since finding an optimal spanning tree is efficient.  However, as shown in \cite{VenkatJohnsonWillsky2008c}, using tractable subgraphs, which are not trees, leads to faster convergence.  Motivated by the need for algorithms with faster convergence, we propose to use spanning block-trees for approximate estimation of Gaussian graphical models.  Thus, at each iteration we compute a weighted graph using (\ref{eq:weights_estimation}) and then use these weights to compute a $B$-width spanning block-tree using Algorithm \ref{alg:spanningblocktree}.  In the next Section, we provide experimental results and show the improved convergence rates of estimation when using spanning block-tree graphs over spanning trees.

\subsection{Experimental Results}
\label{sub:experimental_results}

Let $G = (V,E)$ be an undirected graph and suppose $\xb$ is a Gaussian graphical model defined on $G$ with covariance $\Sigma$ and $J = \Sigma^{-1}$.  We assume the diagonals of $J$ are unity and let $S = I - J$.  The non-zero entries in $H$ correspond to the edges in the graph.  To construct Gaussian graphical models, we choose the non-zero entries in $S$ uniformly between $[-1,1]$ and rescale $S$ so that $\rho(\bar{S}) = 0.99$, where $\bar{S}$ is the matrix of absolute values of the elements of $R$ and $\rho(\cdot)$ is the spectral radius.  From \cite{MalioutovJohnsonWillsky2006}, $\rho(\bar{S}) < 1$ ensures that the graphical model is walk-summable, which in turn ensures convergence of the iterative approximate estimation algorithm in (\ref{eq:et}) \cite{VenkatJohnsonWillsky2008}.  In all experiments, we assume that the observations are given by $y_s = x_s + n_s$, where $n_s \sim {\cal N}(0,10)$.  At each iteration, we compute the residual error, defined as $\frac{||{\bf h}^{(n)}||^2}{||{\bf h}^{(0)}||^2}$, where ${\bf h}^{(n)} = [h_1^{(n)},\ldots,h_n^{(n)}]^T$, for $h_u^{(n)}$ defined in (\ref{eq:error_estimation}).

Fig. \ref{fig:grids}(a) shows results of doing estimation over a randomly generated $50 \times 50$ grid graph using spanning trees (Tree), spanning block-trees with block-width of three (BT-3), and spanning block-trees with block-width of five (BT-5).  It is clear that using spanning block-trees leads to faster convergence.  The same results hold for Fig. \ref{fig:grids}(b) that shows results on doing estimation over a randomly generated $70 \times 70$ grid graph.  

Fig. \ref{fig:15x15grid} shows results of doing estimation over a randomly generated $15 \times 15$ grid graph where two nodes are connected to all the nodes in the graph.  Such graphs, where a few nodes have very high degree, are useful in video surveillance, modeling air traffic routes using hub-and-spoke model, or applications using small-world graphs.  Fig. \ref{fig:15x15grid}(a) plots the residual error at each iteration for the estimate and Fig. \ref{fig:15x15grid}(b) plots the residual error at each iteration for the error covariance.  Again, we observe that using spanning block-trees leads to faster convergence.
The above simulations show that using spanning block-trees for approximate estimation is viable and leads to improved convergence speed.

\begin{figure}
\begin{center}
\subfigure[Estimating $\widehat{\xb}$ on a $50 \times 50$ grid graph.]{
\label{fig:30x30gride}
\includegraphics[scale=0.4]{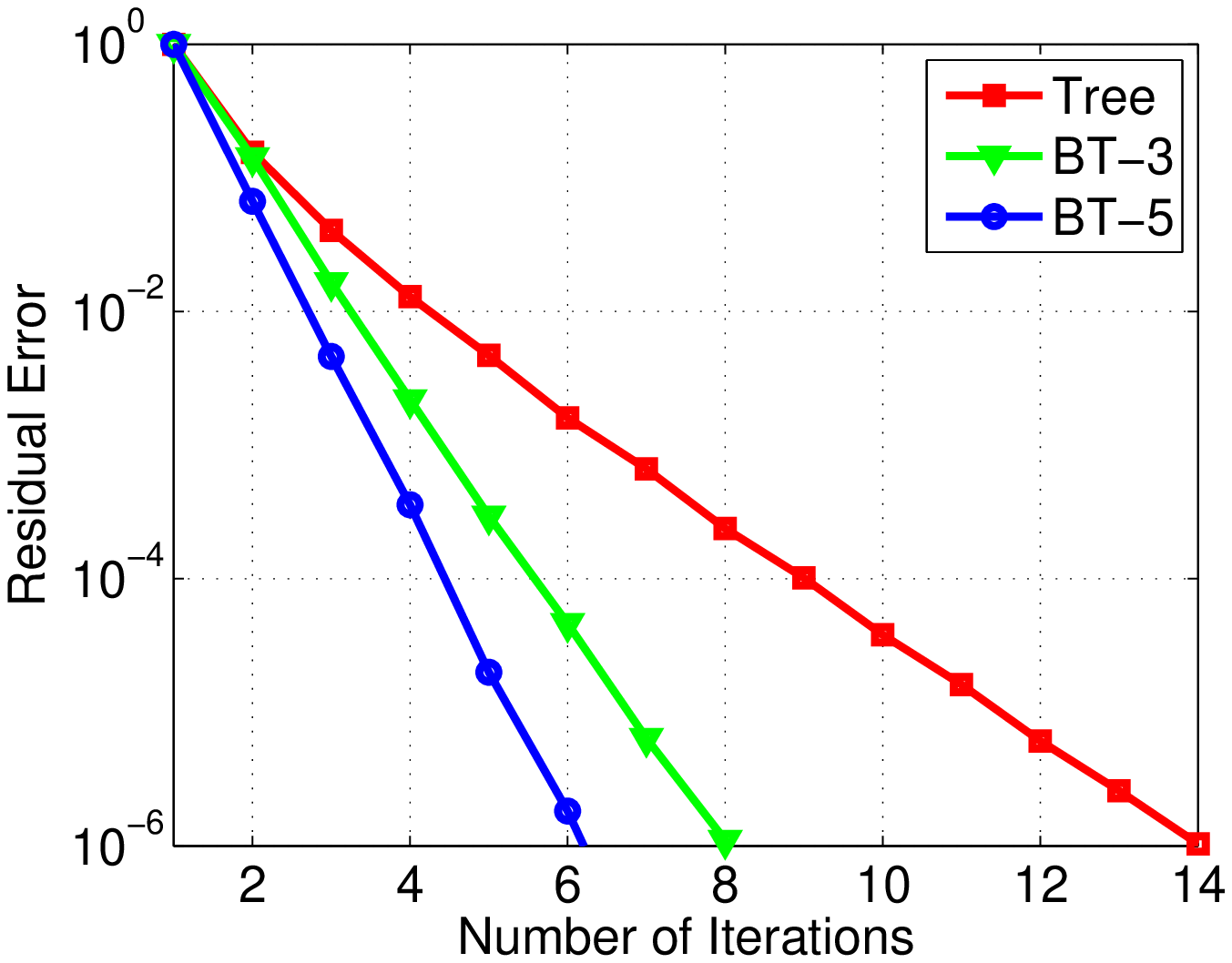}
}
\subfigure[Estimating $\widehat{\xb}$ on a $70 \times 70$ grid graph.]{
\label{fig:50x50gride}
\includegraphics[scale=0.4]{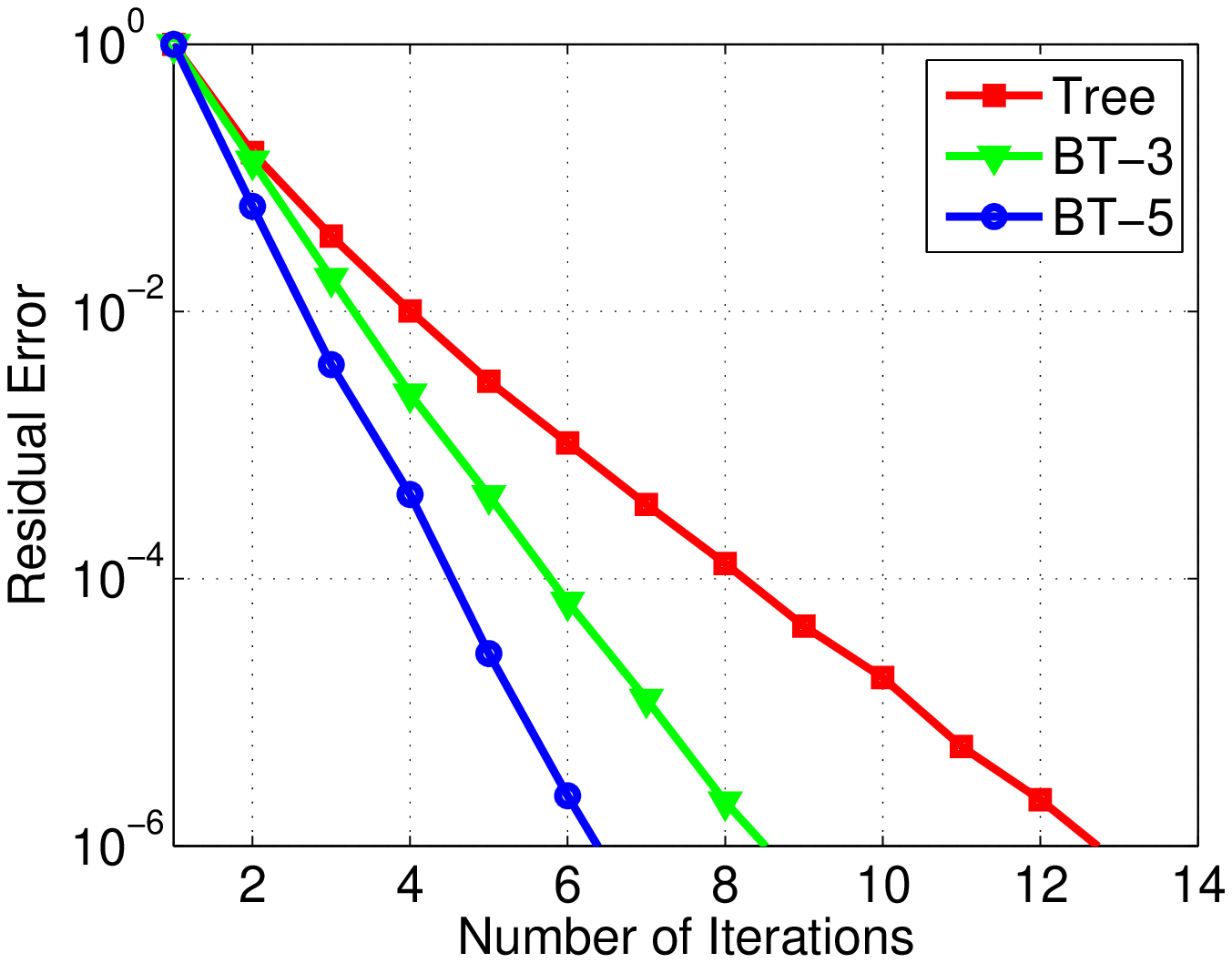}
}
\caption{Normalized residual error when doing approximate estimation on a $50 \times 50$ grid graph and a $70 \times 70$ grid graph.}
\label{fig:grids}
\end{center}
\vspace{-0.3cm}
\end{figure}

\begin{figure}
\begin{center}
\subfigure[Estimating $\widehat{\xb}$]{
\label{fig:15x15gride}
\includegraphics[scale=0.4]{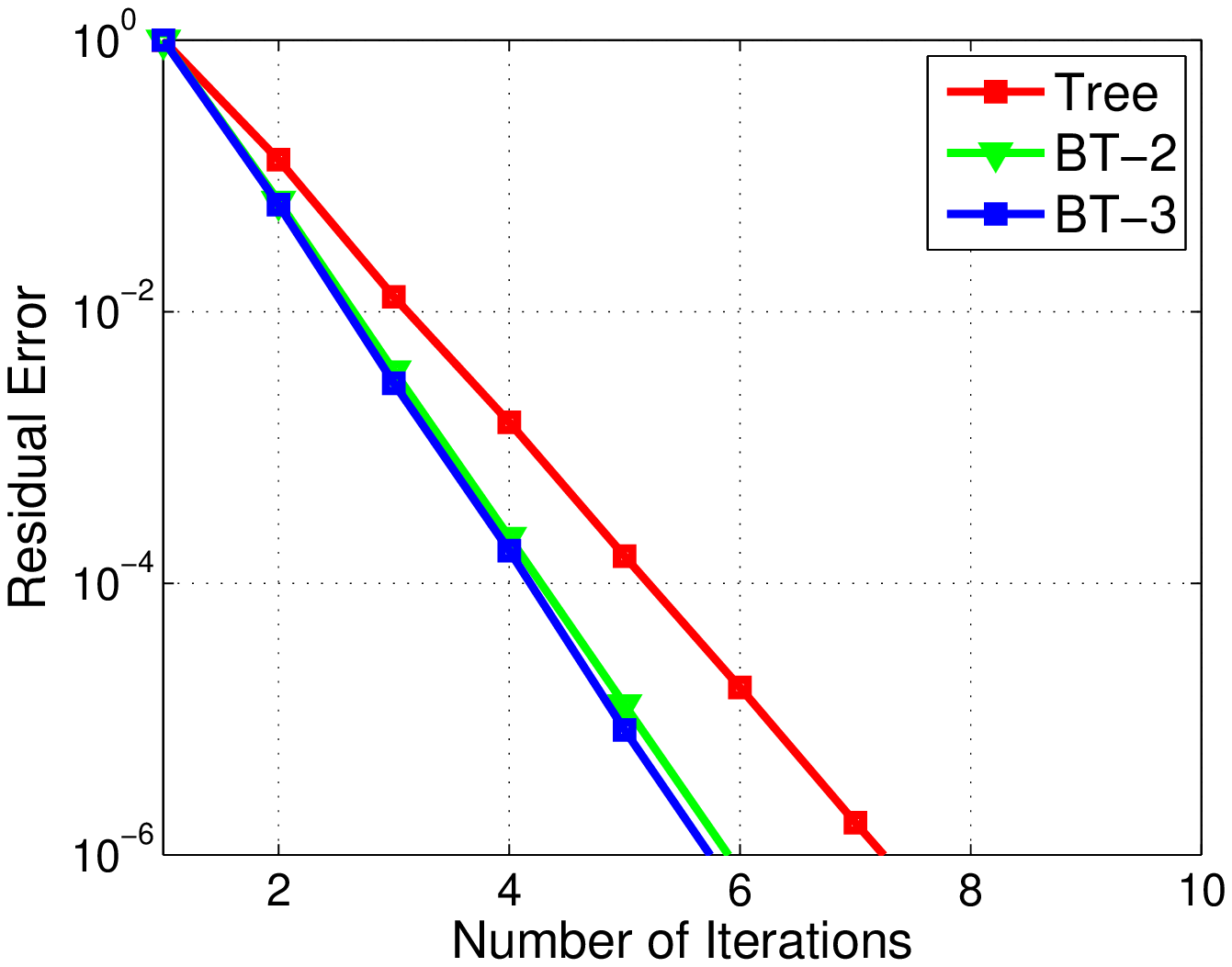}
}
\subfigure[Estimating $\widehat{P}$]{
\label{fig:15x15gridp}
\includegraphics[scale=0.4]{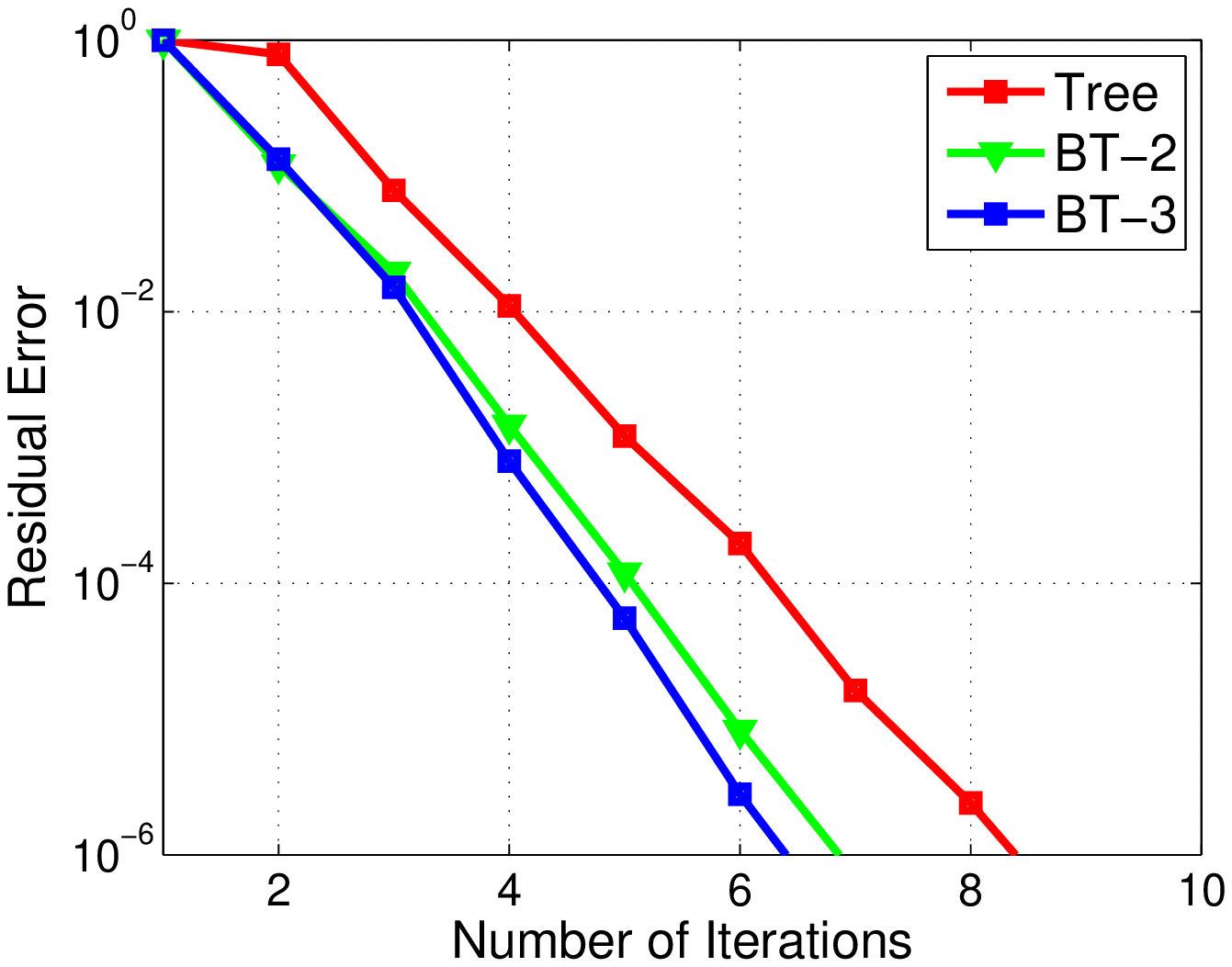}
}
\end{center}
\caption{Normalized residual error when doing approximate estimation using spanning trees, spanning block-trees with block-width of two (BT-2), and spanning block-trees with block-width of three (BT-3) on a $15 \times 15$ grid graph with two nodes connected to all other nodes in the graph.}
\label{fig:15x15grid}
\vspace{-0.3cm}
\end{figure}

\section{Summary}
\label{sec:summary}

We introduced block-tree graphs as an alternative to junction-trees for constructing tree-structured graphs for arbitrary graphical models.  We showed that constructing block-tree graphs is simple and only requires information about a root cluster, which is a small number of nodes.  On the other hand, constructing junction-trees requires knowledge of almost all nodes in the graph.
For graphical models with boundary conditions, we showed that the block-tree graph framework leads to natural representations where we converted a boundary valued problem into a initial value problem.  For Gaussian graphical models, the block-tree graph framework leads to state-space representations, using which we can easily recover recursive algorithms.  Using the block-tree graph framework, we derived an algorithm for approximate estimation in Gaussian graphical models.  The need for such algorithms arises because the problem of exact optimal estimation is computationally intractable for graphs with high treewidth.  We proposed the use of spanning block-trees to derive approximate estimation algorithms for Gaussian graphical models.  We showed that the speed of convergence when using spanning block-trees is faster when compared to using spanning trees.  Further applications of spanning block-trees can be explored when doing approximate inference over discrete graphical models, using the results of \cite{WainwrightMAP2002,Kolmogorov06}.

{\singlespace
% Generated by IEEEtran.bst, version: 1.13 (2008/09/30)

}

\end{document}